\newtheorem{prop}{Proposition}
\newtheorem{rmk}{Remark}
\newtheorem{assumption}{Assumption}
\newcommand{\W}{{\bm W}}
\newcommand{\w}{{\bm w}}
\newcommand{\x}{{\bm x}}
\pgfplotsset{compat=1.15}
\crefname{hypothesis}{Hypothesis}{Hypotheses}
\title{Learning Quantized Neural Nets by Coarse Gradient Method for Non-linear Classification\thanks{Submitted to the editors DATE.
\funding{This work was funded by NSF grants IIS-1632935, DMS-1854434, DMS-1924548, and DMS-1924935.}}}
\author{Ziang Long\thanks{Department of Mathematics, University of California, Irvine, Irvine, CA 92697 
  (\email{zlong6@uci.edu}).}
\and Penghang Yin\thanks{Department of Mathematics and Statistics, University at Albany, State University of New York, Albany, NY 12222 
  (\email{pyin@albany.edu}).}
\and Jack Xin\thanks{Department of Mathematics, University of California, Irvine, Irvine, CA 92697 
  (\email{jack.xin@uci.edu}).}
}
\newcommand*{\addFileDependency}[1]{
  \typeout{(#1)}
  \@addtofilelist{#1}
  \IfFileExists{#1}{}{\typeout{No file #1.}}
}
\begin{document}

\maketitle

\begin{abstract}
Quantized or low-bit neural networks are attractive due to their inference efficiency. However, training deep neural networks with quantized activations involves minimizing a discontinuous and piecewise constant loss function. Such a loss function has zero gradient almost everywhere (a.e.), which makes the conventional gradient-based algorithms inapplicable. 
To this end, we study a novel class of \emph{biased} first-order oracle, termed coarse gradient, for overcoming the vanished gradient issue. A coarse gradient is generated by 
replacing the a.e. zero derivative of quantized (i.e., stair-case) ReLU activation composited in the chain rule with some heuristic proxy derivative called straight-through estimator (STE). Although having been widely used in training quantized networks empirically, fundamental questions like when and why the ad-hoc STE trick works, still lacks theoretical understanding.
In this paper, we propose a class of STEs with certain monotonicity, and consider their applications to the training of a two-linear-layer network with quantized activation functions for non-linear multi-category classification.
We establish performance guarantees for the proposed STEs by showing that the corresponding coarse gradient methods converge to the global minimum, which leads to a perfect classification. Lastly, we present experimental results on synthetic data as well as MNIST dataset to verify our theoretical findings and demonstrate the effectiveness of our proposed STEs.
\end{abstract}

\begin{keywords}
  quantized neural networks, nonlinear classification, coarse gradient descent, discrete optimization
\end{keywords}

\begin{AMS}
  90C26, 68W40
\end{AMS}

\section{Introduction}
Deep neural networks (DNNs) have been the main driving force for the recent wave in artificial intelligence (AI). They have achieved remarkable success in a number of domains including computer vision \cite{imagnet_12,faster_rcnn}, reinforcement learning \cite{mnih2015human,silver2016mastering} and natural language processing \cite{collobert2008unified}, to name a few. However, due to the huge number of model parameters, the deployment of DNNs can be computationally and memory intensive. As such, it remains a great challenge to deploy DNNs on mobile electronics with low computational budget and limited memory storage.

Recent efforts have been made to the quantization of weights and activations of DNNs while in the hope of maintaining the accuracy. More specifically, quantization techniques constrain the weights or/and activation values to low-precision arithmetic (e.g. 4-bit) instead of using the conventional floating-point (32-bit) representation \cite{Hubara2017QuantizedNN,dorefa_16,halfwave_17,inq_17,louizos2018relaxed,ttq_16}. In this way, the inference of quantized DNNs translates to hardware-friendly low-bit computations rather than floating-point operations. That being said, quantization brings three critical benefits for AI systems: energy efficiency, memory savings, and inference acceleration.

The approximation power of weight quantized DNNs was investigated in \cite{he2018relu,ding2018universal}, while the recent paper \cite{shen2020deep}  studies the  approximation power  of DNNs with discretized activations. 
On the computational side, training quantized DNNs typically calls for solving a large-scale optimization problem, yet with extra computational and mathematical challenges. 
Although people often quantize both the weights and activations of DNNs, they can be viewed as two relatively independent subproblems. Weight quantization basically introduces an additional set-constraint that characterizes the quantized model parameters, which can be efficiently carried out by projected gradient type methods \cite{courbariaux2015binaryconnect,twn_16,li2017training,yin2016quantization,hou2018loss,yin2018binaryrelax}. Activation quantization (i.e., quantizing ReLU), on the other hand, involves a stair-case activation function with zero derivative almost everywhere (a.e.) in place of the sub-differentiable ReLU. Therefore, the resulting composite loss function is piece-wise constant and cannot be minimized via the (stochastic) gradient method due to the vanished gradient. 

To overcome this issue, a simple and hardware friendly approach is to use a straight-through estimator (STE) \cite{hinton2012neural,bengio2013estimating,yin2018understanding}. More precisely, one can replace the a.e. zero derivative of quantized ReLU with an ad-hoc surrogate in the backward pass, while keeping the original quantized function during the forward pass. 
Mathematically, STE gives rise to a \emph{biased} first-order oracle computed by an unusual chain rule. This first-order oracle is not the gradient of the original loss function because there exists a mismatch between the forward and backward passes. Throughout this paper, this STE-induced type of ``gradient" is called coarse gradient. 
While coarse gradient is not the true gradient, in practice it works as it miraculously points towards a descent
direction (see \cite{yin2018understanding} for a thorough study in the regression setting). Moreover, coarse gradient has the same computational complexity as standard gradient. Just like the standard gradient descent, the minimization procedure of training activation quantized networks simply proceeds by repeatedly moving one step at current point in the opposite of coarse gradient with some step size. The performance of the resulting coarse gradient method, e.g. convergence property, naturally relies on the choice of STE. How to choose a proper STE so that the resulting training algorithm is provably convergent is still poorly understood,  especially in the nonlinear classification setting.

\subsection{Related Works}

The idea of STE dated back to the classical perceptron algorithm \cite{rosenblatt1957perceptron,rosenblatt1962principles} for binary classification. Specifically, the perceptron algorithm attempts to solve the empirical risk minimization problem: 
\begin{equation}\label{eq:model}
  \min_{\w} \; \sum_{i=1}^N (\mbox{sign}(\x_i^{\top}\w) - y_i)^2,  
\end{equation}
where $(\x_i, y_i)$ is the $i^{\mathrm{th}}$ training sample with $y_i\in\{\pm 1\}$ being a binary label; for a given input $\x_i$, the single-layer perceptron model with weights $\w$ outputs the class prediction $\mbox{sign}(\x_i^{\top}\w)$. To train perceptrons, Rosenblatt \cite{rosenblatt1957perceptron} proposed the following iteration for solving (\ref{eq:model}) with the step size $\eta>0$: 
\begin{equation}\label{eq:perceptron}
    \w^{t+1} = \w^{t} - \eta \sum_{i=1}^N (\mbox{sign}(\x_i^{\top}\w^t) - y_i)\cdot\x_i,
\end{equation}
We note that the above perceptron algorithm is not the same as gradient descent algorithm. Assuming the differentiability, the standard chain rule computes the gradient of the $i^{\mathrm{th}}$ sample loss function by 
\begin{equation}\label{eq:gradient}
(\mbox{sign}(\x_i^{\top}\w^t) - y_i)\cdot (\mbox{sign})^\prime(\x_i^{\top}\w^t)\cdot\x_i.
\end{equation}
Comparing (\ref{eq:gradient}) with (\ref{eq:perceptron}), we observe that the perceptron algorithm essentially uses a coarse (and fake) gradient as if $(\mbox{sign})^\prime$ composited in the chain rule was the derivative of identity function being the constant 1. 
 
  The idea of STE was extended to train deep networks with binary activations \cite{hinton2012neural}. Successful experimental results have demonstrated the effectiveness of
the empirical STE approach. For example, \cite{bengio2013estimating} proposed a STE variant which uses the derivative of sigmoid function instead of identity function. 
\cite{bnn_16} used the derivative of hard tanh function, i.e., $1_{\{|x|\leq1\}}$, as an STE in training binarized neural networks. 
To achieve less accuracy degradation,
STE was later employed to train DNNs with quantized activations at higher bit-widths \cite{Hubara2017QuantizedNN,dorefa_16,halfwave_17,pact,yin2018blended}, where some other STEs were proposed including the derivatives of standard ReLU ($\max\{x, 0\}$) and clipped ReLU ($\min\{\max\{x, 0\}, 1\}$). 

Regarding the theoretical justification, it has been established that the perceptron algorithm in (\ref{eq:perceptron}) with identity STE converges and perfectly classifies linearly separable data; see for examples \cite{widrow199030,freund1999large} and references therein. 
Apart from that, to our knowledge, there had been almost no theoretical justification of STE until recently:  \cite{yin2018understanding} considered a two-linear-layer network with binary activation for regression problems. The training data is assumed to be instead linearly non-separable, being generated by some underlying model with true parameters. In this setting, \cite{yin2018understanding} proved that the working STE is actually non-unique and that the coarse gradient algorithm is descent and converges to a valid critical point if choosing the STE to be the proxy derivative of either ReLU (i.e., $\max\{x, 0\}$) or clipped ReLU function (i.e., $\min\{\max\{x, 0\}, 1\}$). Moreover, they proved that the identity STE fails to give a convergent algorithm for learning two-layer networks, although it works for single-layer perception.

\subsection{Main Contributions}
\pgfplotsset{every axis/.append style={
                    axis x line=middle,
                    axis y line=middle,
                    axis line style={->},
                    xlabel={$x$},
                    ylabel={$\sigma(x)$},
                    y label style={at={(0.1,1)}},
                    line width=1pt,},
                    cmhplot/.style={color=blue,mark=none},
                    soldot/.style={color=blue,only marks,mark=*},
                    holdot/.style={color=blue,fill=white,only marks,mark=*},
                    }

\begin{figure}[ht]
\centering
\scalebox{.8}{
\begin{tabular}{cc}
  \begin{tikzpicture}
        \begin{axis}[
                xmin=-0.45,xmax=1.2,
                ymin=-0.35,ymax=1.4,
                xtick={0,1},
                xticklabels={0,$\tau$},
                ytick={0,1},
                yticklabels={0,$\tau$},
            ]
        \addplot[cmhplot,domain=-1.5:0]{0};
        \addplot[cmhplot,domain=0:1]{1};
        \addplot[soldot]coordinates{(0,0)};
        \addplot[holdot]coordinates{(0,1)};
        \legend{1-bit Quantized ReLU}
        \end{axis}
    \end{tikzpicture}&
    \begin{tikzpicture}
        \begin{axis}[
                xmin=-1.5,xmax=4,
                ymin=-1,ymax=4,
                xtick={0,...,3},
                xticklabels={0,$\tau$,$2\tau$,$3\tau$},
                ytick={0,...,3},
                yticklabels={0,$\tau$,$2\tau$,$3\tau$},,
            ]
        \addplot[cmhplot,domain=-1.5:0]{0};
        \addplot[cmhplot,domain=0:1]{1};
        \addplot[cmhplot,domain=1:2]{2};
        \addplot[cmhplot,domain=2:3.5]{3};
        \addplot[soldot]coordinates{(0,0)(1,1)(2,2)};
        \addplot[holdot]coordinates{(0,1)(1,2)(2,3)};
        \addlegendentry{2-bit Quantized ReLU}
        \end{axis}
    \end{tikzpicture}
  
\end{tabular}
}
\caption{Quantized activation functions. $\tau$ is a value determined in the network training; see section 8.2.}
\label{qrelu}
\end{figure}
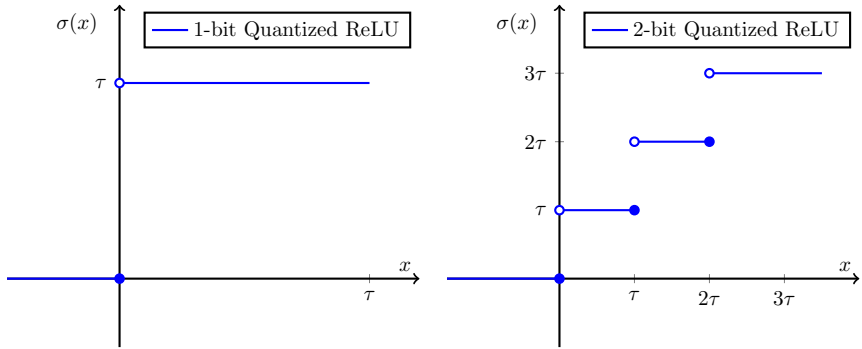
Fig. \ref{qrelu} shows examples of 1-bit (binary) and 2-bit (ternary) activations. We see that a quantized activation function zeros out any negative input, while being increasing on the positive half. Intuitively, a working surrogate of the quantized function used in backward pass should also enjoy this monotonicity, as conjectured by \cite{yin2018understanding} which proved the effectiveness of coarse gradient for two  specific STEs: derivatives of ReLU and clipped ReLU, and for binarized activation. In this work, we take a further step towards understanding the convergence of coarse gradient methods for training networks with {\it general quantized activations} and for {\it  classification of linearly non-separable data}. {\it A major analytical challenge we face here is that the network loss function is not in closed analytical form, in sharp contrast to \cite{yin2018understanding}.} We present more general results to provide meaningful guidance on how to choose STE in activation quantization. Specifically, we study multi-category classification of linearly non-separable data by a two-linear-layer network with multi-bit activations and hinge loss function.
We establish the convergence of coarse gradient methods for a broad class of surrogate functions. More precisely, if a function $g:\mathbb{R}\to\mathbb{R}$ satisfies the following properties:
\begin{itemize}
    \item $g(x) = 0$ for all $x\leq0$,
    \item $g'(x) \geq\delta>0$ for all $x>0$ with some constant $\delta$,
\end{itemize}
then with proper learning rate, the corresponding coarse gradient method converges and perfectly classifies the non-linear data when $g^\prime$ serves as the STE during the backward pass. This gives the affirmation of a conjecture in \cite{yin2018understanding} regarding good choices of STE  
for a {\it classification} (rather than regression) task under {\it weaker data assumptions, e.g. allowing non-Gaussian distributions}.

\subsection{Notations}
We have Table \ref{notations} for notations used in this paper. 

\begin{table}[ht]
\vspace{-0.2cm}
\caption{Frequently Used Notations}\label{notations}
\vspace{-0.3cm}
\centering
\begin{tabular}{ l|l }
\hline
{\bf Symbols} & {\bf Definitions} \\ \hline
$[n]$       & $\{1,2,\cdots,n\}$ \\  \hline
$\mathds{1}_\mathcal{S}(x)$
            & indicator function which take value $1$ for $x\in\mathcal{S}$\\ 
            & and $0$ for  $x\not\in\mathcal{S}$\\ \hline
$|\x|$      & $\ell_2$-norm of vector $\x$\\ \hline
$|\W|$      & the collumn-wise $\ell_2$-norm sum for a matrix $\W$. \\
            & For $\W:=[\w_1,\cdots,\w_k]$, $|\W|=\sum_{j=1}^k|\w_j|$\\\hline
$\mathcal{H}^d$
            & $d$-dimensional Hausdorff measure\\ \hline
$\tilde{\x}$
            & the unit vector in the direction of $\x$, i.e., $\tilde{\x}:=\frac{\x}{|\x|}$.\\
            & Additionally, $\tilde{\bm0}:=\bm0$.\\ \hline
$\sigma$      & quantized ReLU function \\ \hline
$\Omega_{\W}$
            & $\{\x\in\mathcal{X}: l(\W;\{\x,y\})>0\}$ \\ \hline
$\Omega_{\bm v}^a$
            & $\{\x\in\mathcal{X}: \langle\bm v,\x\rangle>a\}$ \\ \hline
$\Omega_\W^j$
            & $\Omega_\W\cap\Omega_{\bm w_j}^0$ \\ \hline
\end{tabular}
\vspace{-0.3cm}
\end{table}

\section{Problem Setup}
\subsection{Data Assumptions}
In this section, we consider the $n$-ary classification problem in the $d$-dimensional space $\mathcal{X}=\mathbb{R}^{d}$.
Let $\mathcal{Y}=[n]$ be the set of labels, and for $i\in[n]$ let $\mathcal{D}_i$ be probabilistic distributions over $\mathcal{X}\times\mathcal{Y}$. Throughout this paper, we make the following assumptions on the data:
\begin{enumerate}
\item \textbf{(Separability)} There are $n$ orthogonal sub-spaces $V_i \subseteq\mathcal{X}$, $i\in[n]$ where $\dim V_i=d_i$, such that  $$\mathop{\mathbb{P}}_{\{\bm x,y\}\sim\mathcal{D}_i}\left[\bm x\in \mathcal{V}_i\text{ and }y=i\right]=1, \; \mbox{for all } i \in [n].$$
    \item \textbf{(Boundedness of data)}  There exist positive constants $m$ and $M$, such that $$\mathop{\mathbb{P}}_{\{\bm x,y\}\sim\mathcal{D}_i}\left[m<\left|\bm x\right|<M\right]=1, \; \mbox{for all } i \in [n].$$
    \item \textbf{(Boundedness of p.d.f.)} For $i\in[n]$, let $p_i$ be the marginal probability distribution function of $\mathcal{D}_i$ on $\mathcal{V}_i$. For any $\x \in \mathcal{V}_i$ with $m<\left|\bm x\right|<M$, it holds that 
    $$0< p_i( \x)<p_{\text{max}}<\infty.$$ 
\end{enumerate}
Later on, we denote $\mathcal{D}$ to be the evenly mixed distribution of $\mathcal{D}_i$ for $i\in[n]$.
\begin{rmk}
The orthogonality of subspaces $\mathcal{V}_i$'s in the data assumption (1) above is technically needed for our proof here. However, the  convergence in Theorem \ref{main1} to a perfect classification with random initialization is observed in more general settings when $\mathcal{V}_i$'s form acute angles and contain a certain level of noise. We refer to section \ref{experiments} for  supporting experimental results. 
\end{rmk}
\begin{rmk}
Assumption (3) can be relaxed to the following, while the proof remains basically the same. 

$\mathcal{D}_i$ is a mixture of $n_i$ distributions namely $\mathcal{D}_{i,j}$ for $j\in[n_i]$. 
There exists a linear decomposition of $\mathcal{V}_i=\bigoplus_{j=1}^{n_i}\mathcal{V}_{i,j}$ and $\mathcal{D}_{i,j}$ each has a marginal probability distribution function $p_{i,j}$ on $\mathcal{V}_{i,j}$. For any $\x\in \mathcal{V}_{i,j}$ and $<m<|\x|<M$, it holds that
$$0<p_{i,j}(\x)\leq p_{\text{max}}<\infty.$$
\end{rmk}

\subsection{Network Architecture}
We consider a two-layer neural architecture with $k$ hidden neurons. Denote 
by $\W=\left[\bm w_1,\cdots,\bm w_{k}\right]\in \mathbb{R}^{d\times k}$ the weight matrix in the hidden layer. Let
$$h_j=\left\langle\bm w_j,\bm x\right\rangle$$
the input to the activation function, or the so-called pre-activation. 
\textbf{Throughout this paper, we make the following assumptions:}
\begin{assumption}\label{v}
The weight matrix in the second layer $\bm V=[\bm v_1,\cdots,\bm v_n]$ is fixed and known in the training process and satisfies: 
\begin{enumerate}
    \item For any $i\in[n]$, there exists some $j\in[k]$ such that $v_{i,j}>0$.
    \item If $v_{i,j}>0$, then for any $r\in[n]$ and $r\not=i$, we have $v_{r,j}=0$.
    \item For any $i\in[n]$ and $j\in[k]$ we have $v_{i,j}<1$.
\end{enumerate} 
\end{assumption}
One can easily show that as long as $k\geq n$, such a matrix $\bm V=(v_{i,j})$ is ubiquitous.

\medskip

For any input data $\bm x\in\mathcal{X}=\mathbb{R}^{d}$, the neural net output is 
\begin{equation}\label{net1}
f(\W;\x)=[o_1,\cdots,o_n],
\end{equation}
where
$$o_i = \left\langle\bm v_i,\sigma\left(\bm h\right)\right\rangle=\sum_{j=1}^kv_{i,j}\sigma(h_j). 
$$

The  $\sigma(\cdot)$ is the quantized ReLU function acting element-wise; 
see Fig. \ref{qrelu} for examples of binary and ternary activation functions. More general quantized ReLU function of the bit-width $b$ can be defined as follows: 
$$
\sigma(x)=\begin{cases}
     0 & \text{if} \quad x\leq0, \\
     \text{ceil}(x) & \text{if} \quad 0<x<2^b-1, \\
     2^b -1 & \text{if} \quad x\geq 2^b-1. \\
\end{cases}
$$
The prediction is given by the network output label
$$\hat{y}(\W,\x)=\mathop{\text{argmax}}_{r\in[n]}o_r,$$
ideally $\hat{y}(\x)=i$ for all $\x\in \mathcal{V}_i$. The classification accuracy in percentage is the frequency that this event  occurs (when network output label $\hat{y}$ matches the true label) on a validation data set. 

Given the data sample $\{\x, y\}$, the associated hinge loss function reads 
\begin{equation}\label{sample_loss}
l(\W; \{ \x, y \}) :=  \max\left\{0, 1 - f_y\right\}:=\max\left\{0, 1 - \left(o_y-\max_{i\not=y}o_i\right)\right\}.
\end{equation}
To train the network with quantized activation $\sigma$, we consider the following population loss minimization problem
\begin{equation}\label{loss}
\min_{\W\in\mathbb{R}^{d\times k}}\; l\left(\W\right) := \mathop{\mathbb{E}}_{\{ \x, y \}\sim\mathcal{D}}\left[ l\left(\W; \{\x, y\}\right)\right],
\end{equation}
where the sample loss $l\left(\W; \{\x, y\}\right)$ is defined in (\ref{sample_loss}).
Let $l_i$ be the population loss function of class $i$ with the label $y=i$, $i \in [n]$. More precisely, 
$$\begin{aligned}
l_i(\W)=&\mathop{\mathbb{E}}_{\{\bm x,y\}\sim\mathcal{D}_i}\left[\max\left\{0, 1 - f_i\right\}\right]\\
=&\mathop{\mathbb{E}}_{\{\bm x,y\}\sim\mathcal{D}_i}\left[\max\left\{0, 1 - \left(o_i - \max_{r\not=i}o_r\right)\right\}\right].
\end{aligned}$$
Thus, we can rewrite the loss function as 
$$l(\W)=\frac{1}{n}\sum_{i=1}^nl_i(\W).$$
Note that the population loss
$$l_i(\W)=\mathop{\mathbb{E}}_{\{\x,y\}\sim\mathcal{D}_i}\left[l(\W;\{\x,y\})\right]$$
fails to have simple closed-form solution even if $p_i$ are constant functions on their supports. We do not have closed-form formula at hand to analyze the learning process, which makes our analysis challenging.

For notational convenience, we define:
$$\Omega_{\W}=\left\{\x\in\mathcal{X}: l(\W;\{\x,y\})>0\right\},$$
$$\Omega_{\bm v}^a=\left\{\x\in\mathcal{X}:\left\langle\bm v,\bm x\right\rangle>a\right\},$$
and
$$\Omega_{\W}^j=\Omega_{\W}\cap\Omega_{\bm w_j}^0.$$

\subsection{Coarse Gradient Methods}
We see that derivative of quantized ReLU function $\sigma$ is a.e. zero, which gives a trivial gradient of sample loss function with respect to (w.r.t.) $\bm w_j$. Indeed, differentiating the sample loss function with respect to $\bm w_j$, we have
$$\nabla_{\bm w_j} l(\W;\{\x,y\})=-\left(v_{y,j}-v_{\xi,j}\right)\,\mathds{1}_{\Omega_{\W}}(\x)\,\sigma'\left(h_j\right)\x = \mathbf{0}, \mbox{ a.e.}, \quad 1\leq j\leq k$$
where $\xi=\mathop{\text{argmax}}_{i\not=y}o_i$.

The partial coarse gradient w.r.t. $\w_j$ associated with the sample $\{\x, y\}$ is given by replacing $\sigma'$ with a straight through estimator (STE) which is the derivative of function $g$, namely,
\begin{equation}\label{cgrad}
\tilde{\nabla}_{\bm w_j}l(\W;\{\x,y\}) := -\left(v_{y,j}-v_{\xi,j}\right)\,\mathds{1}_{\Omega_{\W}}(\x)\,g'(h_j)\x.
\end{equation}
The sample coarse gradient $\tilde{\nabla}l(\W;\{\x,y\})$ is just the concatenation of $\tilde{\nabla}_{\bm w_j}l(\W;\{\x,y\})$'s.
It is worth noting that coarse gradient is not an actual gradient, but some biased first-order oracle which depends on the choice of $g$. 


\textbf{Throughout this paper, we consider a class of surrogate functions during the backward pass with the following properties:}
\begin{assumption}\label{g}
$g:\mathbb{R}\to\mathbb{R}$ satisfies 
\begin{enumerate}
    \item $g(x) = 0$ for all $x\leq0$.
    \item $g'(x)\in[\delta,\tilde\delta]$ for all $x>0$ with some constants $0<\delta<\tilde\delta<\infty$. 
\end{enumerate} 
\end{assumption}
Such a $g$ is ubiquitous in quantized deep networks training; see Fig.\ref{ste} for examples of $g(x)$  satisfying Assumption \ref{g}. Typical examples include the classical ReLU $g(x) = \max(x, 0)$ and log-tailed ReLU \cite{halfwave_17}:
$$g(x)=\left\{\begin{array}{ccc}
     0& \text{if} &x\leq0\\
     x& \text{if} &0<x\leq q_b\\
     q_b+\log (x-q_b+1)& \text{if} & x>q_b\\
\end{array}\right.$$
where $q_b := 2^b-1$ is the maximum quantization level.
In addition, if the input of the activation function is bounded by a constant, one also can use $g(x)=\max\{0,q_b (1- e^{-x/q_b})\}$, which we call \emph{reverse exponential STE}. 

\pgfplotsset{every axis/.append style={
                    axis x line=middle,
                    axis y line=middle,
                    axis line style={->},
                    xlabel={$x$},
                    ylabel={$g(x)$},
                    y label style={at={(0.1,1)}},
                    line width=1pt,},
                    cmhplot/.style={color=blue,mark=none},
                    soldot/.style={color=blue,only marks,mark=*},
                    holdot/.style={color=blue,fill=white,only marks,mark=*},
                    }

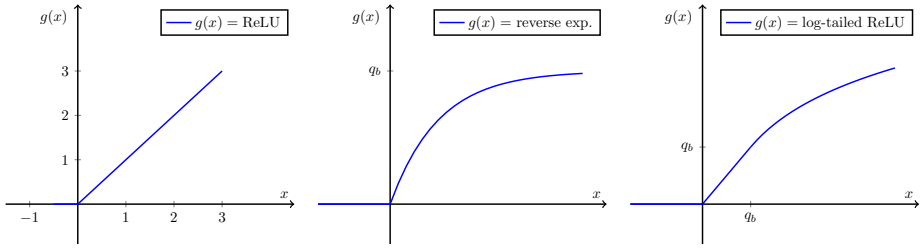
\begin{figure}[ht]
\centering
\scalebox{.7}{
\begin{tabular}{ccc}
\begin{tikzpicture}[scale=0.8]
        \begin{axis}[
                xmin=-1.5,xmax=4.5,
                ymin=-0.9,ymax=4.5,
                xtick={-1,...,3},
                ytick={0,...,3},
            ]
        \addplot[cmhplot,domain=-0.5:0]{0};
        \addplot[cmhplot,domain=0:3]{x};
        \addlegendentry{$g(x)= \mbox{ReLU}$}
        \end{axis}
    \end{tikzpicture}&
  \begin{tikzpicture}[scale=0.8]
        \begin{axis}[
                xmin=-1.5,xmax=4.5,
                ymin=-0.3,ymax=1.5,
                xtick={0},
                ytick={0,...,1},
                yticklabels={0,$q_b$}
            ]
        \addplot[cmhplot,domain=-1.5:0]{0};
        \addplot[cmhplot,domain=0:4]{1-exp(-x)};
        \addlegendentry{$g(x)=$ \mbox{reverse exp.}}
        \end{axis}
    \end{tikzpicture}&

    \begin{tikzpicture}[scale=0.8]
        \begin{axis}[
                xmin=-1.5,xmax=4.5,
                ymin=-0.7,ymax=3.5,
                xtick={0,1},
                xticklabels={0,$q_b$},
                ytick={0,1},
                yticklabels={0,$q_b$}
            ]
        \addplot[cmhplot,domain=-1.5:0]{0};
        \addplot[cmhplot,domain=0:1]{x};
        \addplot[cmhplot,domain=1:4]{ln(x)+1};
        \legend{$g(x) = \mbox{log-tailed ReLU}$}
        \end{axis}
    \end{tikzpicture}
  
\end{tabular}}
\caption{Different choices of $g(x)$ for the straight-through estimator.}
\label{ste}
\end{figure}

To train the network with quantized activation $\sigma$, we use the expectation of coarse gradient over training samples:
$$
\tilde{\nabla} l(\W): = \mathop{\mathbb{E}}_{\{ \x, y \}\sim\mathcal{D}} \tilde{\nabla}l(\W;\{\x,y\})
$$
where $\tilde{\nabla}l(\W;\{\x,y\})$ is given by (\ref{cgrad}). In this paper, we study the convergence of coarse gradient algorithm for solving the minimization problem (\ref{loss}), which takes the following iteration with some learning rate $\eta>0$:
\begin{equation}\label{cgd}
\W^{t+1}= \W^{t}-\eta\,\tilde{\nabla} l(\W^{t})
\end{equation} 

\section{Main Result and Outline of Proof}
We show that if the iterates $\{\W^t\}$ are uniformly bounded in $t$, coarse gradient decent with the proxy function $g$ under Assumption \ref{g} converges to a global minimizer of the population loss, resulting in a perfect classification.

\begin{theorem}\label{main1}
Suppose data assumptions (1)-(3)
and STE assumptions  \ref{v}-\ref{g} hold. If the network initialization satisfies $\w_{j,i}^0\not=0$ for all $j\in[k]$ and $i\in[n]$ and $\W^t$ is uniformly bounded by $R$ in $t$, then for all $v_{i,j}>0$ we have
$$\lim_{t\rightarrow\infty}\left|\tilde{\nabla}_{\bm w_j}l_i(\W^t)\right|=0.$$

Furthermore, if $\W^\infty$ is an accumulation point of $\{\W^t\}$ and all non-zero unit vectors $\tilde{\bm w}_{j,i}^\infty$'s are distinct for all $j\in[k]$ and $i\in[n]$, then
$$\mathop{\mathbb{P}}_{\{\x,y\}\sim\mathcal D}\left(\hat{y}\left(\W^{\infty},\x\right)\neq y\right)=0.$$
\end{theorem}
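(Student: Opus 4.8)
The plan is to read the coarse gradient iteration (\ref{cgd}) as a genuine descent method for the population loss $l$, even though $\tilde{\nabla}l\neq\nabla l$. First I would record that, although the \emph{per-sample} loss is piecewise constant, the expectation against the bounded densities $p_i$ (assumptions (2)--(3)) makes $\W\mapsto l(\W)$ continuously differentiable, with $\nabla l$ represented by surface integrals over the moving quantization sets $\{\langle\w_j,\x\rangle\in\mathbb{Z}\}$; boundedness of the data and of the iterates (by $R$) then supplies a uniform Lipschitz constant $L$ for $\nabla l$ on $\{|\W|\le R\}$. The decisive step is a \emph{positive-correlation} estimate $\langle\nabla l(\W),\tilde{\nabla}l(\W)\rangle\ge c\,|\tilde{\nabla}l(\W)|^2$ for some $c>0$: the monotonicity in Assumption \ref{g}, namely $g'\ge\delta>0$, guarantees that stepping $\w_j$ along $-\tilde{\nabla}_{\w_j}l$ pushes the mass of the loss region $\Omega_{\W}$ across the thresholds in the direction that raises $o_y$, which is precisely the direction in which the boundary integrals defining $\nabla l$ decrease $l$. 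Feeding this into the smoothness inequality $l(\W^{t+1})\le l(\W^t)-\eta\langle\nabla l(\W^t),\tilde{\nabla}l(\W^t)\rangle+\tfrac{L\eta^2}{2}|\tilde{\nabla}l(\W^t)|^2$ and taking $\eta$ small yields $l(\W^{t+1})\le l(\W^t)-c'\eta\,|\tilde{\nabla}l(\W^t)|^2$; since $l\ge0$, telescoping forces $\sum_t|\tilde{\nabla}l(\W^t)|^2<\infty$, hence $|\tilde{\nabla}l(\W^t)|\to0$, and in particular $|\tilde{\nabla}_{\w_j}l_i(\W^t)|\to0$ for every $v_{i,j}>0$.

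For the classification claim I would pass to an accumulation point $\W^\infty$ (using continuity of $\tilde{\nabla}l_i$ in $\W$) so that $\tilde{\nabla}_{\w_j}l_i(\W^\infty)=0$ for every active neuron, $v_{i,j}>0$. Fix such a pair $(i,j)$. By Assumption \ref{v} one has $v_{r,j}=0$ for $r\ne i$, so for $\x\in\mathcal{V}_i$ (where $\mathcal{D}_i$ is supported) the coefficient $v_{y,j}-v_{\xi,j}$ collapses to $v_{i,j}>0$; moreover, since $g(x)=0$ for $x\le0$, the factor $g'(h_j)$ is supported on $\{h_j>0\}$ and is $\ge\delta$ there. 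Thus the identity $\tilde{\nabla}_{\w_j}l_i(\W^\infty)=\bm0$ reads $\int_{\Omega_{\W^\infty}^j}g'(\langle\w_{j,i}^\infty,\x\rangle)\,\x\,p_i(\x)\,d\mathcal{H}^{d_i}(\x)=\bm0$, a vector equation inside $\mathcal{V}_i$. Pairing it with $\tilde{\w}_{j,i}^\infty$ and using $\langle\tilde{\w}_{j,i}^\infty,\x\rangle>0$ on $\Omega_{\W^\infty}^j$, together with $g'\ge\delta>0$ and $p_i>0$, makes the integrand strictly positive off a null set, so the equation can hold only if $\mathbb{P}_{\mathcal{D}_i}(\Omega_{\W^\infty}^j)=0$: the positive-loss region meets each active half-space $\{h_j>0\}$ in a $\mathcal{D}_i$-null set.

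It remains to upgrade these per-neuron statements to $\mathbb{P}_{\mathcal{D}}(\hat y\ne y)=0$. Decompose $\Omega_{\W^\infty}\cap\mathcal{V}_i=\big(\bigcup_{v_{i,j}>0}\Omega_{\W^\infty}^j\big)\cup\big(\Omega_{\W^\infty}\cap\{h_j\le0\ \forall\,v_{i,j}>0\}\big)$. The first union is null by the previous step. For the second piece I would show the active half-spaces $\{\langle\w_{j,i}^\infty,\x\rangle>0\}$, $v_{i,j}>0$, cover $\mathcal{V}_i$ up to an $\mathcal{H}^{d_i}$-null set, i.e.\ the active weight projections positively span $\mathcal{V}_i$; this is where the distinctness of the nonzero $\tilde{\w}_{j,i}^\infty$ and the initialization $\w_{j,i}^0\ne\bm0$ enter, ruling out that all active projections lie in a common half-space. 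Granting the covering, the set where no active neuron fires is null, so $\Omega_{\W^\infty}\cap\mathcal{V}_i$ is null; hence the hinge loss vanishes $\mathcal{D}_i$-a.e., i.e.\ $o_i-\max_{r\ne i}o_r\ge1>0$ a.e., which is correct classification. Averaging $\mathbb{P}_{\mathcal{D}_i}(\hat y\ne i)=0$ over $i$ gives the claim.

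I expect the main obstacle to be the positive-correlation estimate of the first paragraph, precisely because $l$ admits no closed form: one must express $\nabla l$ through integrals over the moving surfaces $\{\langle\w_j,\x\rangle=c\}$ and show that its pairing with the volume integral $\tilde{\nabla}l$ is bounded below by $c\,|\tilde{\nabla}l|^2$ uniformly on $\{|\W|\le R\}$, while also establishing the continuity in $\W$ needed to pass $\tilde{\nabla}l_i(\W^t)\to\tilde{\nabla}l_i(\W^\infty)$ through the accumulation point, a delicate point since both $\Omega_{\W}$ and $\{h_j>0\}$ move with $\W$. A secondary difficulty is the covering step of the last paragraph, where the distinctness hypothesis must be converted into genuine positive spanning of each $\mathcal{V}_i$ by the active weight projections.
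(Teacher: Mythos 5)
Your plan diverges from the paper at the decisive step of part one, and the step you leave open is precisely the one that cannot be carried out here. You reduce everything to the positive-correlation estimate $\langle\nabla l(\W),\tilde{\nabla}l(\W)\rangle\ge c\,|\tilde{\nabla}l(\W)|^2$ and then invoke the standard descent lemma, but you offer only a heuristic for the correlation bound, and the heuristic hides a real obstruction: the true gradient $\nabla l$ is a \emph{surface} integral concentrated on the moving quantization boundaries $\{\langle\w_j,\x\rangle=a\}$, while $\tilde{\nabla}l$ is a \emph{volume} integral over $\Omega_{\W}\cap\{h_j>0\}$; there is no evident uniform constant $c>0$ relating the two on $\{|\W|\le R\}$, and the paper explicitly flags the lack of a closed form for $l$ as the reason the correlation-based analysis of the regression setting (where Gaussian data give explicit formulas) does not transfer. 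The paper's actual mechanism is entirely different and avoids $\nabla l$ altogether: it never shows $l(\W^t)$ decreases. Instead, Lemmas \ref{helper1} and \ref{helper2} show that for every active pair ($\tilde v_{i,j}>0$) the coarse gradient has a guaranteed positive component along $\tilde\w_{j,i}$, quantified as $\langle\tilde\w_{j,i},-\tilde\nabla_{\w_j}l_i(\W)\rangle\ge\frac{\tilde v_{i,j}\delta}{2C_p}\mathbb{P}(\Omega_\W^j)^2\gtrsim|\tilde\nabla_{\w_j}l_i(\W)|^2$, so $|\w_{j,i}^t|$ is monotonically increasing with per-step increment bounded below by a multiple of $|\tilde\nabla_{\w_j}l_i(\W^t)|^2$ (Lemma \ref{descent}); the uniform bound $R$ then makes the increments summable (Proposition \ref{conv1}), giving $|\tilde\nabla_{\w_j}l_i(\W^t)|\to0$. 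This monotone-norm Lyapunov argument also needs the space decomposition (Lemmas \ref{remainder}--\ref{decouple}, exploiting orthogonality of the $\mathcal{V}_i$), which your part one does not use. Until you actually prove your correlation inequality, part one of your proof has a genuine gap, whereas your per-neuron conclusion at the limit, $\mathbb{P}_{\mathcal{D}_i}(\Omega^j_{\W^\infty})=0$ for $v_{i,j}>0$, is correct and mirrors Lemma \ref{helper2}.

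The covering step in your last paragraph rests on a wrong mechanism. Distinctness of the nonzero $\tilde\w_{j,i}^\infty$ together with $\w_{j,i}^0\neq\bm0$ does \emph{not} imply that the active half-spaces $\{\langle\w^\infty_{j,i},\x\rangle>0\}$ positively span $\mathcal{V}_i$: two distinct directions at a small angle are distinct, yet their half-spaces leave a set of positive measure uncovered, and nothing in those hypotheses rules out all active projections lying in a common half-space. What actually forces the covering in the paper is the vanishing coarse gradient combined with the margin structure: if $\x\in\Omega^a_{\w_j}$ then $\x\notin\Omega_\W$ a.s., so the margin is at least $1$; since each quantization level contributes $v_{i,j}-v_{\xi,j}<1$ (Assumption \ref{v}(3)), some \emph{other} set $\Omega^{a'}_{\w_{j'}}$ of the family must also contain $\x$, and the all-or-nothing topological Lemma \ref{top} then forces $\bigcup_{j,a}\Omega^a_{\w_j}$ to meet the annulus in either nothing or everything (Proposition \ref{globalmin}); the empty branch is excluded because the monotone norm growth from Lemma \ref{descent} keeps $\w^\infty_{j,i}\neq\bm0$ for active pairs. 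This is where the initialization hypothesis genuinely enters---not, as you propose, to create positive spanning. Finally, you correctly flag but do not resolve the passage to the accumulation point: $\tilde\nabla_{\w_j}l_i$ is continuous only where all $\w_{j,i}\neq\bm0$ (Lemma \ref{contgrad}), and the paper handles possible zero limits $\w^\infty_{j,r}=\bm0$ (which by Lemma \ref{helper2} can occur only for $v_{r,j}=0$) by constructing the surrogate sequence $\hat\W^t$ whose outputs, losses, and loss regions coincide with those of $\W^t$, so that the limit of the coarse gradients can be taken legitimately. As it stands, your proposal needs a new idea for the correlation estimate and a corrected covering argument before either part goes through.
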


We outline the major steps in the proof below. \medskip

\textbf{Step 1: Decompose the population loss into $n$ components.} Recall the definition of $l_i$ which is population loss functions for $\{\x,y\}\sim\mathcal{D}_i$. In Section 4, we show under certain decomposition of $\W$, the coarse gradient decent of each one of them only affects a corresponding component of $\W$. 

\textbf{Step 2: Bound the total increment of weight norm from above.} 
Show that for all $v_{i,j}>0$ we have $|\bm w_{j,i}|$'s 
are monotonically increasing under coarse gradient descent. 
Based on boundedness on $\W$, we further give an upper bound on the total increment of all $|\bm w_j|$'s, from which the convergence of coarse gradient descent
follows.

\textbf{Step 3: Show that when the coarse gradient vanishes, so does the population loss.} 
In section 6, we show that when the coarse gradient vanishes towards the end of  training, the population loss is zero which implies a  perfect classification. 


\section{Space Decomposition}

With $\mathcal{V}=\bigoplus_{i=1}^n \mathcal{V}_i$, we have the orthogonal complement of $\mathcal{V}$ in $\mathcal{X}=\mathbb{R}^d$, namely $\mathcal{V}_{n+1}$. Now, we can decompose $\mathcal{X}=\mathbb{R}^d$ into $n+1$ linearly independent parts:
$$\mathbb{R}^d=\mathcal{V}\bigoplus \mathcal{V}_{n+1}=\bigoplus_{i=1}^{n+1}\mathcal{V}_i$$
and for any vector $\bm w_j\in\mathbb{R}^d$, we have a unique decomposition of $\bm w_j$:
$$\w_j=\sum_{i=1}^{n+1}\w_{j,i},$$
where $\w_{j,i}\in \mathcal{V}_i$  for $i\in[n+1]$. 
To simply notation, we let
$$\W_i=\left[\w_{1,i},\cdots,\w_{k,i}\right].$$
\begin{lemma}\label{remainder}
For any $\W\in\mathbb{R}^{k\times d}$ and $i\in[n]$, we have
$$l_i\left(\W\right)=l_i\left(\sum_{r=1}^n\W_r\right)=l_i(\bm W_i).$$
\end{lemma}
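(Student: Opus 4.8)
The plan is to exploit the orthogonality of the subspaces $\mathcal{V}_r$ together with the separability assumption, in order to show that, on the support of $\mathcal{D}_i$, every pre-activation $h_j$ depends only on the component $\w_{j,i}$ of $\w_j$ that lies in $\mathcal{V}_i$. Since $l_i$ is nothing but the expectation under $\mathcal{D}_i$ of a function of the pre-activations, this reduction will immediately yield both claimed identities. First I would invoke data assumption (1): for $\{\x,y\}\sim\mathcal{D}_i$ we have $\x\in\mathcal{V}_i$ with probability one, so it suffices to understand the integrand for $\x\in\mathcal{V}_i$.

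Using the decomposition $\w_j=\sum_{r=1}^{n+1}\w_{j,r}$ with $\w_{j,r}\in\mathcal{V}_r$, and the fact that $\mathcal{V}_1,\dots,\mathcal{V}_n,\mathcal{V}_{n+1}$ are mutually orthogonal, I would compute, for any $\x\in\mathcal{V}_i$,
$$h_j=\langle\w_j,\x\rangle=\sum_{r=1}^{n+1}\langle\w_{j,r},\x\rangle=\langle\w_{j,i},\x\rangle,$$
since every term with $r\neq i$ vanishes by orthogonality. The key consequence is that the \emph{entire} network output reduces to a function of $\W_i$ on the support of $\mathcal{D}_i$: each $o_r=\sum_{j}v_{r,j}\sigma(h_j)$ depends on $\W$ only through the $h_j$'s, so $o_r$, the margin $o_i-\max_{r\neq i}o_r$, and hence the sample hinge loss $\max\{0,1-(o_i-\max_{r\neq i}o_r)\}$ are all unchanged if we replace each $\w_j$ by $\w_{j,i}$. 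The very same computation shows nothing changes if we instead replace $\w_j$ by $\sum_{r=1}^n\w_{j,r}$, because we thereby discard only the $\mathcal{V}_{n+1}$ component, which never contributes to $h_j$ for $\x\in\mathcal{V}_i$. Taking expectations over $\mathcal{D}_i$ then gives $l_i(\W)=l_i\left(\sum_{r=1}^n\W_r\right)=l_i(\W_i)$.

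Since each step is a direct consequence of orthogonality, I do not anticipate a genuine obstacle. The only point requiring care is the bookkeeping that confirms \emph{every} $o_r$, and not merely $o_i$, reduces to a function of $\W_i$; this is what ensures the $\max_{r\neq i}$ competitor term inside the hinge loss is handled correctly and that the reduction survives the nonlinearity $\sigma$ and the outer maximum.
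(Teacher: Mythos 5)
Your proposal is correct and follows essentially the same route as the paper's proof: both use the separability assumption to restrict to $\x\in\mathcal{V}_i$, use orthogonality of the $\mathcal{V}_r$'s (including the complement $\mathcal{V}_{n+1}$) to reduce each pre-activation to $h_j=\langle\w_{j,i},\x\rangle$, conclude that the full network output $f(\W;\x)$ — hence the sample hinge loss — coincides with that of $\W_i$, and take expectations over $\mathcal{D}_i$. Your explicit remark that \emph{every} output $o_r$, not just $o_i$, reduces to a function of $\W_i$ is a point the paper's proof leaves implicit, but it is the same argument.
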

\begin{proof}
Note that for any $\x\in \mathcal{V}_i$ and $j\in[k]$, we have $\x\in \mathcal{V}$, so
$$\left\langle\w_{j,n+1},\x\right\rangle=0$$
and 
$$h_j=\left\langle\w_{j},\x\right\rangle=\left\langle\sum_{j=1}^k\w_{j,i},\x\right\rangle=\left\langle\bm w_{j,i},\x\right\rangle.$$
Hence
$$f\left(\W;\x\right)=f\left(\sum_{j=1}^k\W_i;\x\right)=f\left(\bm W_i\right)$$
for all $\W\in\mathbb{R}^{d\times k}$, $\x\in \mathcal{V}_i$. The desired result follows.
\end{proof}
\begin{lemma}\label{decouple}
Running the algorithm (\ref{cgd}) on $l_i$ only does not change the value of $\W_r$ for all $r\not=i$. More precisely, for any $\W\in\mathbb{R}^{d\times k}$, let $$\W'=\W-\eta\tilde{\nabla}l_i(\W),$$ then for any $r\in[n]$ and $r\not=i$ $$\W_r'=\W_r.$$
\end{lemma}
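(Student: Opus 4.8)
The plan is to show that the coarse gradient of $l_i$ with respect to each hidden weight $\w_j$ lies entirely in the subspace $\mathcal{V}_i$, so the update direction has no component along any $\mathcal{V}_r$ with $r\neq i$, and the conclusion $\W_r'=\W_r$ then follows by reading off the orthogonal decomposition coordinatewise.

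First I would unpack the definition of the sample coarse gradient (\ref{cgrad}) specialized to $\mathcal{D}_i$. By the separability assumption (1), a sample $\{\x,y\}\sim\mathcal{D}_i$ satisfies $\x\in\mathcal{V}_i$ and $y=i$ with probability one. Hence for each hidden unit $j$ the sample partial coarse gradient
$$\tilde{\nabla}_{\w_j}l_i(\W;\{\x,i\}) = -\left(v_{i,j}-v_{\xi,j}\right)\,\mathds{1}_{\Omega_{\W}}(\x)\,g'(h_j)\,\x$$
is a \emph{scalar} multiple of $\x$, the scalar $-\left(v_{i,j}-v_{\xi,j}\right)\mathds{1}_{\Omega_{\W}}(\x)g'(h_j)$ being bounded by Assumption \ref{v}, Assumption \ref{g}, and the boundedness of data. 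Since $\x\in\mathcal{V}_i$ almost surely, this sample coarse gradient lies in $\mathcal{V}_i$.

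Next, because $\mathcal{V}_i$ is a linear subspace (hence closed under the expectation of integrable $\mathcal{V}_i$-valued random vectors), the full coarse gradient $\tilde{\nabla}_{\w_j}l_i(\W)=\mathbb{E}_{\{\x,y\}\sim\mathcal{D}_i}\big[\tilde{\nabla}_{\w_j}l_i(\W;\{\x,i\})\big]$ also lies in $\mathcal{V}_i$ for every $j$. Writing the update as $\w_j'=\w_j-\eta\,\tilde{\nabla}_{\w_j}l_i(\W)$ and decomposing both sides along $\mathbb{R}^d=\bigoplus_{r=1}^{n+1}\mathcal{V}_r$, the subtracted term touches only the $\mathcal{V}_i$-component. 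Therefore $\w_{j,r}'=\w_{j,r}$ for every $r\neq i$ and every $j$, which is exactly $\W_r'=\W_r$.

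I do not anticipate a genuine obstacle: the content is essentially that the separability of the data forces each per-class coarse gradient to live in the corresponding orthogonal subspace. The only point requiring a little care is confirming that the scalar prefactors are integrable, so that the expectation defining $\tilde{\nabla}_{\w_j}l_i(\W)$ is well-defined and stays in the closed subspace $\mathcal{V}_i$; this follows immediately from the boundedness of $\x$, of $g'$, and of the entries of $\bm V$.
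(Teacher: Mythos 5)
Your proposal is correct and matches the paper's own argument: both establish that $\tilde{\nabla}_{\w_j}l_i(\W)\in\mathcal{V}_i$ (because samples from $\mathcal{D}_i$ lie in $\mathcal{V}_i$ almost surely, so each sample coarse gradient is a scalar multiple of $\x\in\mathcal{V}_i$, and expectation preserves the subspace) and then invoke the uniqueness of the decomposition along $\bigoplus_{r}\mathcal{V}_r$ to conclude $\w_{j,r}'=\w_{j,r}$ for $r\neq i$. Your added remarks on integrability of the scalar prefactor are a harmless refinement the paper leaves implicit.
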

\begin{proof}[Proof of Lemma \ref{decouple}]
Assume $i,r\in[n]$ and $i\not=r$. Note that 
$$\w_j'=\w_j-\eta\tilde{\nabla}_{\bm w_j}l_i(\W)$$
and
$$\tilde{\nabla}_{\bm w_j}l_i(\W)=-\mathop{\mathbb{E}}_{\{\x,y\}\sim\mathcal{D}_i}\left[\left(v_{y,j}-v_{\xi,j}\right)\,\mathds{1}_{\Omega_{\W}}(\x)\,g'(h_j)\x\right]\in V_i.$$
Since $\mathcal{V}_i$'s are linearly  independent, we have
$$\w_{j,i}'=\w_{j,i}-\eta\tilde{\nabla}_{\bm w_j}l_i(\W)$$
and
$$\bm w_{j,r}'=\bm w_{j,r}.$$
\end{proof}

By the above result, we know (\ref{cgd}) is equivalent to
\begin{equation}\label{dcgd}
\W_i^{t+1} = \W_i^t - \frac{\eta}{n} \tilde{\nabla} l_i\left(\W^t\right).
\end{equation}

\section{Learning Dynamics}
In this section, we show that some components of the weight iterates have strictly increasing magnitude whenever coarse gradient does not vanish, and it quantifies the increment during each iteration. 
\begin{lemma}\label{helper1}
Assume $$\hat{v}_j=\max_{i_1,i_2\in[n]}v_{i_1,j}-v_{i_2,j}\,,$$ we have the following estimate:
$$\mathop{\mathbb{P}}_{\{\x,y\}\sim\mathcal{D}_i}\left(\Omega_W^j\right)\geq\frac{1}{\hat{v}_j\tilde{\delta}M}\left|\tilde{\nabla}_{\bm w_j}l_i\left(\W\right)\right|.$$
\end{lemma}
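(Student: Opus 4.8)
The plan is to bound the norm of the vector-valued expectation defining $\tilde{\nabla}_{\bm w_j}l_i(\W)$ by moving the norm inside the expectation and then estimating each factor of the integrand uniformly on its support. First I would recall the expression for the class-$i$ coarse gradient obtained in the proof of Lemma~\ref{decouple}. Since every sample drawn from $\mathcal{D}_i$ carries the label $y=i$ almost surely, this reads
$$\tilde{\nabla}_{\bm w_j}l_i(\W) = -\mathop{\mathbb{E}}_{\{\x,y\}\sim\mathcal{D}_i}\left[\left(v_{i,j}-v_{\xi,j}\right)\,\mathds{1}_{\Omega_{\W}}(\x)\,g'(h_j)\,\x\right].$$
Applying the triangle inequality for vector-valued integrals (the norm of an expectation is at most the expectation of the norm), and noting $g'(h_j)\geq 0$ by Assumption~\ref{g}, I obtain
$$\left|\tilde{\nabla}_{\bm w_j}l_i(\W)\right| \leq \mathop{\mathbb{E}}_{\{\x,y\}\sim\mathcal{D}_i}\left[\left|v_{i,j}-v_{\xi,j}\right|\,\mathds{1}_{\Omega_{\W}}(\x)\,g'(h_j)\,|\x|\right].$$

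The decisive observation, and the step that produces $\Omega_{\W}^j$ rather than the larger set $\Omega_{\W}$, is that under Assumption~\ref{g} one has $g(x)=0$ for $x\leq 0$, whence $g'(h_j)=0$ whenever $h_j=\langle\bm w_j,\x\rangle\leq 0$. Therefore the product $\mathds{1}_{\Omega_{\W}}(\x)\,g'(h_j)$ is supported on $\Omega_{\W}\cap\Omega_{\bm w_j}^0=\Omega_{\W}^j$, so I may insert the indicator $\mathds{1}_{\Omega_{\W}^j}(\x)$ into the integrand without changing its value.

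It then remains to bound the surviving factors on $\Omega_{\W}^j$: the weight difference by $|v_{i,j}-v_{\xi,j}|\leq\hat{v}_j$ directly from the definition of $\hat{v}_j$; the surrogate derivative by $g'(h_j)\leq\tilde{\delta}$, which is legitimate because $h_j>0$ on $\Omega_{\bm w_j}^0$ and there the upper bound of Assumption~\ref{g} applies; and the data norm by $|\x|<M$ from the boundedness assumption. Substituting these uniform estimates gives
$$\left|\tilde{\nabla}_{\bm w_j}l_i(\W)\right| \leq \hat{v}_j\,\tilde{\delta}\,M\,\mathop{\mathbb{E}}_{\{\x,y\}\sim\mathcal{D}_i}\left[\mathds{1}_{\Omega_{\W}^j}(\x)\right] = \hat{v}_j\,\tilde{\delta}\,M\,\mathop{\mathbb{P}}_{\{\x,y\}\sim\mathcal{D}_i}\left(\Omega_{\W}^j\right),$$
and dividing through by $\hat{v}_j\tilde{\delta}M$ yields the claim. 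There is no genuine obstacle in this argument; the only point requiring care is the support reduction in the second paragraph, since it is precisely the vanishing of the STE $g'$ on the nonpositive axis that upgrades the $\Omega_{\W}$-indicator to an $\Omega_{\W}^j$-indicator and thereby sharpens the bound to the measure of $\Omega_{\W}^j$.
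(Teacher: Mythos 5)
Your proposal is correct and follows essentially the same route as the paper's proof, which is just the single chain of inequalities $\left|\tilde{\nabla}_{\bm w_j}l_i(\W)\right| \leq \hat{v}_j\tilde{\delta}M\,\mathbb{E}\left[\mathds{1}_{\Omega^j_{\W}}(\x)\right] = \hat{v}_j\tilde{\delta}M\,\mathbb{P}\left(\Omega_{\W}^j\right)$ obtained by the triangle inequality and the same three uniform bounds. The only difference is that you make explicit the step the paper leaves implicit --- that $g'(h_j)=0$ for $h_j\leq 0$ upgrades the indicator from $\Omega_{\W}$ to $\Omega_{\W}^j$ --- which is a faithful filling-in, not a different argument.
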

\begin{proof}[Proof of Lemma \ref{helper1}]
\begin{align*}
\left|\tilde{\nabla}_{\bm w_j}l_i(\W)\right|
=&\left|\mathop{\mathbb{E}}_{\{\x,y\}\sim\mathcal{D}_i}\left[\left(v_{y,j}-v_{\xi,j}\right)\,\mathds{1}_{\Omega_{\W}}(\x)\,g'(h_j)\x\right]\right|\\
\leq& \hat{v}_j\tilde{\delta}M\mathop{\mathbb{E}}_{\{\x,y\}\sim\mathcal{D}_i}\left[\mathds{1}_{\Omega^j_{\W}}(\x)\right]\\
= &\hat{v}_j\tilde{\delta}M\mathop{\mathbb{P}}_{\{\x,y\}\sim\mathcal{D}_i}\left(\Omega_{\W}^j\right)
\end{align*}
\end{proof}
\begin{lemma}\label{helper2}
For any $j\in[k]$ if 
$$\tilde{v}_{i,j}:=v_{i,j}-\max_{r\not=i}v_{r,j}>0$$
we have
$$\left\langle\tilde{\w}_{j,i},-\tilde{\nabla}_{\w_j}l_i(\W)\right\rangle\geq\frac{\tilde{v}_{i,j}\delta}{2C_p}\mathop{\mathbb{P}}_{\{\x,y\}\sim\mathcal{D}_i}\left(\Omega_W^j\right)^2,$$
where
$$C_p=\max_{\bm v\in V_i,a\in\mathbb{R}}\int_{\langle\bm v,
\bm x\rangle=a}p_i(\x)\;d\,\mathcal{H}^{d_i-1}(\x).$$
\end{lemma}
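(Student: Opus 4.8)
The plan is to first exploit the sign structure of the coarse gradient to reduce the claim to a lower bound on a first moment, and then to control that moment by a slicing argument together with a rearrangement (bathtub) inequality.

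First I would expand the inner product. Since every sample from $\mathcal{D}_i$ carries the label $y=i$, we have $v_{y,j}=v_{i,j}$, and by Lemma \ref{remainder} the pre-activation obeys $h_j=\langle\w_{j,i},\x\rangle$ for $\x\in\mathcal{V}_i$. Writing $-\tilde{\nabla}_{\w_j}l_i(\W)$ from (\ref{cgrad}) and pairing with $\tilde{\w}_{j,i}$ gives
$$\left\langle\tilde{\w}_{j,i},-\tilde{\nabla}_{\w_j}l_i(\W)\right\rangle = \mathop{\mathbb{E}}_{\{\x,y\}\sim\mathcal{D}_i}\left[(v_{i,j}-v_{\xi,j})\,\mathds{1}_{\Omega_{\W}}(\x)\,g'(h_j)\,\langle\tilde{\w}_{j,i},\x\rangle\right].$$
Because $g'(h_j)=0$ whenever $h_j\leq0$, the integrand is supported on $\Omega_{\W}\cap\{h_j>0\}=\Omega_{\W}^j$. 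On this set three facts hold at once: $g'(h_j)\geq\delta$ by Assumption \ref{g}; $v_{i,j}-v_{\xi,j}\geq v_{i,j}-\max_{r\neq i}v_{r,j}=\tilde{v}_{i,j}>0$ since $\xi\neq i$; and $\langle\tilde{\w}_{j,i},\x\rangle=h_j/|\w_{j,i}|>0$. Hence every factor is nonnegative on $\Omega_{\W}^j$ (and the integrand vanishes off it), so
$$\left\langle\tilde{\w}_{j,i},-\tilde{\nabla}_{\w_j}l_i(\W)\right\rangle \geq \tilde{v}_{i,j}\,\delta\int_{\Omega_{\W}^j}\langle\tilde{\w}_{j,i},\x\rangle\,p_i(\x)\,d\x.$$

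It then remains to show $\int_{\Omega_{\W}^j}\langle\tilde{\w}_{j,i},\x\rangle\,p_i(\x)\,d\x\geq \frac{1}{2C_p}P^2$, where $P:=\mathop{\mathbb{P}}_{\{\x,y\}\sim\mathcal{D}_i}(\Omega_{\W}^j)$. Writing $t:=\langle\tilde{\w}_{j,i},\x\rangle$ and slicing $\mathcal{V}_i$ by the level sets of this linear functional (coarea/Fubini), I would set $\mu(t):=\int_{\Omega_{\W}^j\cap\{\langle\tilde{\w}_{j,i},\x\rangle=t\}}p_i\,d\mathcal{H}^{d_i-1}$. Since $\tilde{\w}_{j,i}\in\mathcal{V}_i$ is a unit vector, the definition of $C_p$ yields $0\leq\mu(t)\leq C_p$, while by construction $\int_0^\infty\mu(t)\,dt=P$ and $\int_{\Omega_{\W}^j}\langle\tilde{\w}_{j,i},\x\rangle\,p_i\,d\x=\int_0^\infty t\,\mu(t)\,dt$.

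The last step is the rearrangement inequality: among all profiles with $0\leq\mu\leq C_p$ and total mass $P$, the first moment is minimized by concentrating the mass near the origin, namely $\mu\equiv C_p$ on $[0,P/C_p]$. Concretely, with $s:=P/C_p$ one splits $\int_0^\infty t\mu\,dt-\int_0^s tC_p\,dt$ over $[0,s]$ and $[s,\infty)$ and uses $t(\mu-C_p)\geq s(\mu-C_p)$ on $[0,s]$ together with $t\mu\geq s\mu$ on $[s,\infty)$ to see this difference is $\geq s\left(\int_0^\infty\mu\,dt-P\right)=0$; hence $\int_0^\infty t\mu\,dt\geq\int_0^s tC_p\,dt=\frac{P^2}{2C_p}$, which combined with the previous display gives the claim. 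I expect the main obstacle to be the slicing step: one must verify that the marginal of $p_i$ in the direction $\tilde{\w}_{j,i}$ is genuinely bounded by $C_p$ (this is where both the Hausdorff-measure definition of $C_p$ and the containment $\tilde{\w}_{j,i}\in\mathcal{V}_i$ are essential), and that intersecting with $\Omega_{\W}^j$ only shrinks each slice, so the full-slice bound $\mu(t)\leq C_p$ survives.
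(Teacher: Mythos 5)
Your proposal is correct and follows essentially the same route as the paper: reduce the inner product to the first moment $\tilde{v}_{i,j}\,\delta\int_{\Omega_{\W}^j}\langle\tilde{\w}_{j,i},\x\rangle p_i(\x)\,d\x$, slice by the level sets of $\langle\tilde{\w}_{j,i},\cdot\rangle$, and use the bound $C_p$ on each slice to conclude the worst-case moment is $P^2/(2C_p)$. The only difference is presentational: the paper runs the layer-cake identity and integrates the tail bound $\mathbb{P}\left(\Omega_{\W}^j\cap\{\langle\tilde{\w}_{j,i},\x\rangle>t\}\right)\geq\max\{P-tC_p,0\}$, whereas you bound the first moment of the slice density $\mu(t)$ directly via the bathtub rearrangement — these are the same computation in integrated versus differentiated form.
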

\begin{proof}[Proof of Lemma \ref{helper2}]
First, we prove an inequality which will be used later. Recall that $|\x|\leq M$, and that $\tilde{\nabla}_{\bm w_j}l(\W,\{\x,y\})\not=0$ only when $\bm x\in\Omega_{\W}^j$. Hence, we have $\left\langle\tilde{\bm w}_{j,i},\x\right\rangle>0$.
We have
\begin{align*}
\mathop{\mathbb{P}}_{\{\x,y\}\sim\mathcal{D}_i}\left(\Omega_{\W}^j\cap\left\{\x:\left\langle\tilde{\w}_{j,i},\x\right\rangle<t\right\}\right)
=&\int_{\Omega_\W^j}\mathds{1}_{\left\{\left\langle\tilde{\w}_{j,i},\x\right\rangle<t\right\}}(\x)p_i(\x)\;d\,\x\\
=&\int_0^t\int_{\left\langle\tilde{\w}_{j,i},\x\right\rangle=s}p_i(\x)\;d\,\mathcal{H}^{d_i-1}(\x)\;d\,s \\
\leq & t\ C_p.
\end{align*}

Now, we use Fubini's Theorem to simplify the inner product:
\begin{align*}
\left\langle\tilde{\bm w}_{j,i},-\tilde{\nabla}_{\bm w_j}l_i(\W)\right\rangle
=&\mathop{\mathbb{E}}_{\{\x,y\}\sim\mathcal{D}_i}\left[\left(v_{y,j}-v_{\xi,j}\right)\mathds{1}_{\Omega_{\W}^j}(\x)\,g'(h_j)\,\langle\tilde{\bm w}_{j,i},\x\rangle\right]\\
\geq&\tilde{v}_{i,j}\,\delta\int_{\Omega_{\W}^j\cap V_i}\langle\tilde{\bm w}_{j,i},\x\rangle p_i(\x)\;d\,\x\\
=&\tilde{v}_{i,j}\,\delta\int_{\Omega_{\W}^j\cap V_i}\int_0^\infty\mathds{1}_{\left\{\langle\tilde{\w}_{j,i},\x\rangle>t\right\}}\;d\,t\;p_i(\x)\;d\,\x\\
=&\tilde{v}_{i,j}\,\delta\int_0^\infty\int_{\Omega_{\W}^j\cap V_i}\mathds{1}_{\left\{\langle\tilde{\w}_{j,i},\x\rangle>t\right\}}\;p_i(\x)\;d\,\x\;d\,t\\
=&\tilde{v}_{i,j}\,\delta\int_0^\infty\mathop{\mathbb{P}}_{\{\x,y\}\sim\mathcal{D}_i}\left(\Omega_{\W}^j\cap\left\{\x:\langle\tilde{\bm w}_{j,i},\x\rangle>t\right\}\right)d\,t.
\end{align*}

Now using the inequality just proved above, we have

\begin{align*}
&\mathop{\mathbb{P}}_{\{\x,y\}\sim\mathcal{D}_i}\left(\Omega_{\W}^j\cap\left\{\x:\langle\tilde{\bm w}_{j,i},\x\rangle>t\right\}\right)\\
=&\mathop{\mathbb{P}}_{\{\x,y\}\sim\mathcal{D}_i}\left(\Omega_\W^j\right)-\mathop{\mathbb{P}}_{\{\x,y\}\sim\mathcal{D}_i}\left(\Omega_{\W}^j\cap\left\{\x:\langle\tilde{\bm w}_{j,i},\x\rangle<t\right\}\right)\\
\geq&\max\left\{\mathop{\mathbb{P}}_{\{\x,y\}\sim\mathcal{D}_i}\left(\Omega_\W^j\right)-t\;C_p,0\right\}.
\end{align*}

Combining the above two inequalities, we have
\begin{align*}
\left\langle\tilde{\bm w}_{j,i},-\tilde{\nabla}_{\bm w_{j}}l_i(\W)\right\rangle
\geq&\tilde{v}_{i,j}\,\delta\int_0^\infty\max\left\{\mathop{\mathbb{P}}_{\{\x,y\}\sim\mathcal{D}_i}\left(\Omega_\W^j\right)-t\;C_p,0\right\}\;d\,t\\
\geq&\frac{\tilde{v}_{i,j}\,\delta}{2C_p}\mathop{\mathbb{P}}_{\{\x,y\}\sim\mathcal{D}_i}\left(\Omega_{\W}^j\right)^2.
\end{align*}
\end{proof}

\begin{lemma}\label{descent}
If $\tilde{v}_{i,j}>0$ in Lemma \ref{helper2}, then $\{|\bm w_{j,i}^{t}|\}$ in Equation (\ref{net1}) is non-decreasing with coarse gradient decent (\ref{cgd}).
Moreover, under the same assumption, we have
$$\left|\bm w_{j,i}^{t+1}\right|-\left|\bm w_{j,i}^t\right|\geq\frac{
\eta\tilde{v}_{i,j}\delta}{2nC_p \hat{v}_j^2\tilde{\delta}^2M^2}\left|\tilde{\nabla}_{\bm w_j}l_i(\W^t)\right|^2,$$
where $C_p$ is defined as in Lemma \ref{helper2} and $\hat{v}_j$ as in Lemma \ref{helper1}.
\end{lemma}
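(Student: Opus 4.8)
The plan is to combine the two preceding lemmas with a single elementary convexity estimate for the Euclidean norm. By Lemma~\ref{decouple} and the equivalent update (\ref{dcgd}), the relevant iterate evolves as $\w_{j,i}^{t+1}=\w_{j,i}^t-\frac{\eta}{n}\tilde{\nabla}_{\w_j}l_i(\W^t)$, so the whole statement reduces to lower-bounding $|\w_{j,i}^{t+1}|-|\w_{j,i}^t|$ for a step of exactly this form.

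First I would use the fact that $|\bm a|\geq\langle\bm u,\bm a\rangle$ for any unit vector $\bm u$; taking $\bm u=\tilde{\w}_{j,i}^t$ (the direction of the \emph{current} iterate) and $\bm a=\w_{j,i}^{t+1}$ gives
$$|\w_{j,i}^{t+1}|\geq\langle\tilde{\w}_{j,i}^t,\w_{j,i}^{t+1}\rangle=|\w_{j,i}^t|+\frac{\eta}{n}\langle\tilde{\w}_{j,i}^t,-\tilde{\nabla}_{\w_j}l_i(\W^t)\rangle.$$
This is just the subgradient inequality for the convex norm at $\w_{j,i}^t$, and it is the linear estimate that lets the cross term drive the increment without the quadratic penalty $\frac{\eta^2}{n^2}|\tilde{\nabla}_{\w_j}l_i|^2$ getting in the way. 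Since the inner product on the right is nonnegative by Lemma~\ref{helper2}, the monotonicity claim $|\w_{j,i}^{t+1}|\geq|\w_{j,i}^t|$ follows immediately.

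For the quantitative bound I would simply chain the two helper lemmas into the displayed increment. Lemma~\ref{helper2} turns $\langle\tilde{\w}_{j,i}^t,-\tilde{\nabla}_{\w_j}l_i(\W^t)\rangle$ into $\frac{\tilde{v}_{i,j}\delta}{2C_p}\,\mathbb{P}_{\mathcal{D}_i}(\Omega_{\W^t}^j)^2$, and then Lemma~\ref{helper1} replaces $\mathbb{P}_{\mathcal{D}_i}(\Omega_{\W^t}^j)$ by $\frac{1}{\hat v_j\tilde\delta M}|\tilde{\nabla}_{\w_j}l_i(\W^t)|$; squaring and collecting the factors $\frac{\eta}{n}$, $\frac{\tilde v_{i,j}\delta}{2C_p}$, $\frac{1}{\hat v_j^2\tilde\delta^2 M^2}$ reproduces the stated coefficient $\frac{\eta\tilde v_{i,j}\delta}{2nC_p\hat v_j^2\tilde\delta^2M^2}$.

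I do not expect a serious obstacle here, since the content is essentially bookkeeping once the helper lemmas are in place. The one point requiring care is the choice of anchor direction in the convexity step: it must be $\tilde{\w}_{j,i}^t$ and not $\tilde{\w}_{j,i}^{t+1}$, because only the former yields a genuine lower bound (the norm evaluated against its own subgradient at the current point). I would also dispose of the degenerate case $\w_{j,i}^t=\bm0$, where $\tilde{\w}_{j,i}^t=\bm0$ by convention: there the estimate is trivially $|\w_{j,i}^{t+1}|\geq0$, and it stays consistent with Lemma~\ref{helper2} because $\w_{j,i}^t=\bm0$ forces $h_j=0$ on $\mathcal{V}_i$ and hence $\mathbb{P}_{\mathcal{D}_i}(\Omega_{\W^t}^j)=0$, so both sides vanish. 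Under the theorem's initialization $\w_{j,i}^0\neq\bm0$, the already-established monotonicity keeps $|\w_{j,i}^t|$ bounded away from zero, so this case does not recur during the iteration.
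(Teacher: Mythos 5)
Your proposal is correct and follows essentially the same route as the paper: the paper's proof likewise uses the linear lower bound $\left|\bm w_{j,i}^{t+1}\right|-\left|\bm w_{j,i}^t\right|\geq\left\langle\bm w_{j,i}^{t+1}-\bm w_{j,i}^t,\tilde{\bm w}_{j,i}^{t}\right\rangle$ anchored at the current direction, then chains Lemma~\ref{helper2} and Lemma~\ref{helper1} to obtain the stated constant. Your extra remarks on the degenerate case $\bm w_{j,i}^t=\bm 0$ and on why the anchor must be $\tilde{\bm w}_{j,i}^t$ are sound refinements of the same argument rather than a different approach.
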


\begin{proof}[Proof of Lemma \ref{descent}]
Since
$\bm w_{j,i}^{t+1}=\bm w_{j,i}^{t}-\frac{\eta}{n}\tilde{\nabla}_{\bm w_j}l_i(\W^t)$,
we have
$$\left|\bm w_{j,i}^{t+1}\right|-\left|\bm w_{j,i}^t\right|\geq\left\langle\bm w_{j,i}^{t+1}-\bm w_{j,i}^t,\tilde{\bm w}_{j,i}^{t}\right\rangle=\left\langle-\frac{\eta}{n}\tilde{\nabla}_{\bm w_j}l_i(\W^t),\tilde{\bm w}_{j,i}^t\right\rangle.$$
Hence, it follows from  Lemma \ref{helper1} and Lemma  \ref{helper2} that
\begin{equation}\label{delta2}
\left|\bm w_{j,i}^{t+1}\right|-\left|\bm w_{j,i}^t\right|\geq\frac{
\eta\tilde{v}_{i,j}\delta}{2nC_p \hat{v}_j^2\tilde{\delta}^2M^2}\left|\tilde{\nabla}_{\bm w_j}l_i(\W^t)\right|^2,
\end{equation}
which is the desired result.
\end{proof}
Note that one component of $\bm w_j$ is increasing but the weights are bounded by assumption, hence, summation of the increments over all steps should also be bounded. This gives the following proposition:
\begin{prop}\label{conv1}
Assume $\{|\bm w_j^t|\}$ is bounded by $R$, then if $\tilde{v}_{i,j}>0$ in Lemma \ref{helper2}, then
$$\sum_{t=1}^\infty\left|\tilde{\nabla}_{\bm w_j}l_i(\W^t)\right|^2\leq\frac{2nC_p \hat{v}_j^2\tilde{\delta}^2M^2R}{
\eta\tilde{v}_{i,j}\delta}<\infty,$$
where $C_p$ is as defined in Lemma \ref{helper2} and $\hat{v}_j$ defined in Lemma \ref{helper1}. This implies that $$\lim_{t\to\infty} \left|\tilde{\nabla}_{\w_j}l_i(\W^t)\right| = 0$$
as long as $\tilde{v}_{i,j}>0$.
\end{prop}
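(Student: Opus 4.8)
The plan is to convert the per-iteration lower bound from Lemma \ref{descent} into a convergent series via a telescoping argument. First I would recall inequality (\ref{delta2}): whenever $\tilde{v}_{i,j}>0$,
$$\left|\bm w_{j,i}^{t+1}\right|-\left|\bm w_{j,i}^t\right|\geq\frac{\eta\tilde{v}_{i,j}\delta}{2nC_p \hat{v}_j^2\tilde{\delta}^2M^2}\left|\tilde{\nabla}_{\bm w_j}l_i(\W^t)\right|^2\geq 0.$$
In particular the right-hand side is non-negative, so $\{|\bm w_{j,i}^t|\}$ is a non-decreasing sequence. Summing this inequality over $t=1,\dots,T$, the left-hand side telescopes to $|\bm w_{j,i}^{T+1}|-|\bm w_{j,i}^{1}|$, which produces a finite-horizon estimate for the partial sum $\sum_{t=1}^{T}|\tilde{\nabla}_{\bm w_j}l_i(\W^t)|^2$ in terms of the net growth of $|\bm w_{j,i}|$.

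The key step is to bound this net growth uniformly in $T$ using the boundedness hypothesis. Here I would invoke the orthogonal space decomposition of Section 4: since $\bm w_j=\sum_{r=1}^{n+1}\bm w_{j,r}$ with the components $\bm w_{j,r}$ lying in the mutually orthogonal subspaces $\mathcal{V}_r$, the Pythagorean identity gives $|\bm w_{j,i}|\leq|\bm w_j|\leq R$ for every $t$. Hence, using $|\bm w_{j,i}^{1}|\geq 0$, we get $|\bm w_{j,i}^{T+1}|-|\bm w_{j,i}^{1}|\leq|\bm w_{j,i}^{T+1}|\leq R$ for all $T$. Rearranging the telescoped inequality then yields
$$\sum_{t=1}^{T}\left|\tilde{\nabla}_{\bm w_j}l_i(\W^t)\right|^2\leq\frac{2nC_p \hat{v}_j^2\tilde{\delta}^2M^2R}{\eta\tilde{v}_{i,j}\delta},$$
a bound independent of $T$.

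Finally, letting $T\to\infty$ delivers the claimed bound on the full series. Because this is a series of non-negative terms with a finite sum, it converges, so its general term must tend to zero, giving $\lim_{t\to\infty}|\tilde{\nabla}_{\bm w_j}l_i(\W^t)|=0$. I expect no genuine obstacle here, as the argument is a standard telescoping-and-summability estimate; the only point requiring care is transferring the norm bound $R$ from the full vector $\bm w_j$ to its component $\bm w_{j,i}$, which is exactly where the orthogonality of the $\mathcal{V}_r$'s is used, and making sure the monotonicity from Lemma \ref{descent} is applied to the component $\bm w_{j,i}$ rather than to $\bm w_j$ itself.
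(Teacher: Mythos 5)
Your proposal is correct and matches the paper's reasoning: the paper derives Proposition \ref{conv1} exactly by telescoping the increment bound (\ref{delta2}) from Lemma \ref{descent} and using the boundedness $|\bm w_{j,i}^t|\leq|\bm w_j^t|\leq R$ to cap the total growth, whence summability of $\left|\tilde{\nabla}_{\bm w_j}l_i(\W^t)\right|^2$ forces the terms to vanish. Your added care in transferring the bound $R$ from $\bm w_j$ to the component $\bm w_{j,i}$ via orthogonality of the $\mathcal{V}_r$'s is a detail the paper leaves implicit, but it is the same argument.
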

\begin{rmk}
Lemmas \ref{helper1}, \ref{helper2}, \ref{descent} and Proposition \ref{conv1} were proved without Assumption \ref{v}. Under Assumption \ref{v}, we have $\hat{v}_j=\max_{i\in[n]}v_{i,j}$ in Lemma \ref{helper1} and 
$\tilde{v}_{i,j}=\hat{v}_j$ if $v_{i,j}>0$ and $\tilde{v}_{i,j}=-\hat{v}_j$ if $v_{i,j}=0$ in Lemma \ref{helper2}. 
\end{rmk}
\section{Landscape Properties}
We have shown that under boundedness assumptions, the algorithm will converge to some point where the coarse gradient vanishes. However, this does not immediately indicate the convergence to a valid point because coarse gradient is a fake gradient. We will need the following lemma to prove Proposition \ref{globalmin}, which confirms that the points with zero coarse gradient are indeed global minima. 

\begin{lemma}\label{top}
Let $\Omega=\left\{\x\in \mathbb{R}^l:m<|\x|<M\right\}$, where $0<m<M<\infty$. For $j\in[k]$, let $\Omega_j=\left\{\x:\langle\w_j,\x\rangle>a\right\}$, where $a\geq0$ and $\Omega_i\not=\Omega_j$ for all $i\not=j$. If for $i\in[k]$ and $\x\in\Omega_i\cap\Omega$, there exists some $j\not=i$ such that $\x\in\Omega_j$, then
$$\left(\mathop{\cup}_{j=1}^k\Omega_{j}\right)\cap\Omega=\emptyset\ \text{ or }\ \Omega.$$
\end{lemma}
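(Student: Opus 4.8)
## Proof Plan for Lemma \ref{top}

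The statement is a topological dichotomy: under the hypothesis that every point of $\Omega$ lying in some half-space $\Omega_i$ also lies in a \emph{second} half-space $\Omega_j$, the union $\bigcup_j \Omega_j$ must either miss $\Omega$ entirely or cover all of it. The plan is to argue by connectedness. Since $0 < m < M < \infty$ and the ambient dimension $l$ is at least $2$ in the cases of interest, the annulus $\Omega = \{m < |\x| < M\}$ is a connected open set. The set $U := \left(\bigcup_{j=1}^k \Omega_j\right) \cap \Omega$ is open (a finite union of open half-spaces intersected with an open set). If I can show its complement in $\Omega$ is \emph{also} open, then by connectedness of $\Omega$ one of the two must be empty, which is exactly the claimed dichotomy.

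First I would set up the core geometric observation. The hard part is showing that the boundary of $U$ relative to $\Omega$ is empty. A boundary point $\x_0 \in \Omega$ of $U$ would have to lie on the bounding hyperplane $\{\langle \w_j, \x\rangle = a\}$ of at least one of the half-spaces $\Omega_j$ it touches, but not strictly inside any $\Omega_j$. The key is to use the hypothesis: if $\x_0$ were in the closure of $U$ but not in $U$, I would examine how it sits relative to all the hyperplanes. The hypothesis says that if a point of $\Omega$ is in $\Omega_i$ it is automatically in some other $\Omega_j$; the role of the distinctness assumption $\Omega_i \neq \Omega_j$ (equivalently, the half-spaces are genuinely different) is to prevent the bounding hyperplanes from coinciding, so that near a point on one hyperplane, the other active half-space has a transversal boundary and provides an open neighborhood staying inside the union.

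The central step I expect to be the main obstacle is ruling out a boundary point lying simultaneously on the bounding hyperplane of \emph{one} half-space while being at the very edge of a \emph{second}. Concretely, suppose $\x_0 \in \partial U \cap \Omega$. Then $\x_0 \notin U$, so $\langle \w_j, \x_0\rangle \le a$ for every $j$. Since $\x_0$ is a limit of points in $U$, there is a sequence $\x_n \to \x_0$ with each $\x_n$ strictly inside some $\Omega_{j(n)}$; passing to a subsequence, fix one index $j$ with $\langle \w_j, \x_n\rangle > a$ for all $n$, forcing $\langle \w_j, \x_0\rangle = a$, i.e. $\x_0$ lies on that hyperplane. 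I would then perturb $\x_0$ slightly \emph{into} $\Omega_j$ along $+\w_j$ to obtain a point $\x' \in \Omega_j \cap \Omega$; by the hypothesis, $\x'$ also lies in some $\Omega_{j'}$ with $j' \neq j$, and because $\x'$ can be taken arbitrarily close to $\x_0$, continuity forces $\langle \w_{j'}, \x_0\rangle \ge a$, hence $= a$ as well. The delicate combinatorial point is that repeating this, the collection of hyperplanes passing through $\x_0$ must satisfy the same ``doubling'' property locally, and since these are finitely many distinct hyperplanes through a common point, a small ball around $\x_0$ splits into finitely many open cones on which the active index set is constant; on at least one such cone the point is interior to $U$, contradicting $\x_0 \in \partial U$ unless the complement is empty.

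To close the argument cleanly I would therefore show the complement $\Omega \setminus U$ is open: take any $\x_0 \in \Omega \setminus U$, so $\langle \w_j, \x_0\rangle \le a$ for all $j$. If $\langle \w_j, \x_0 \rangle < a$ strictly for every $j$, then a small ball around $\x_0$ stays in $\Omega \setminus U$ and we are done. The only obstruction is when $\langle \w_j, \x_0\rangle = a$ for some indices; here I invoke the hypothesis to derive that $\x_0$ cannot be approached from inside $U$ without violating the ``each point in one half-space lies in a second'' condition together with the distinctness $\Omega_i \neq \Omega_j$, which guarantees the second active hyperplane is transversal and thus its own half-space already excludes a neighborhood. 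Once both $U$ and $\Omega \setminus U$ are shown open, connectedness of the annulus $\Omega$ yields $U = \emptyset$ or $U = \Omega$, completing the proof.
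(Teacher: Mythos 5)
Your strategy---show that both $U:=\bigl(\cup_{j=1}^k\Omega_j\bigr)\cap\Omega$ and $\Omega\setminus U$ are open and conclude by connectedness of the annulus---is genuinely different from the paper's, which never argues pointwise: the paper observes that $\tilde\Omega^c=\cap_{j=1}^k\Omega_j^c$ is a closed convex polyhedron containing $\bm0$ (this is exactly where $a\geq0$ enters), and then rules out the case $\mathcal{H}^{l-1}\bigl(\partial\tilde\Omega\cap\Omega\bigr)>0$ because the doubling hypothesis would force a positive-$\mathcal{H}^{l-1}$-measure overlap of two distinct bounding hyperplanes, hence their coincidence, contradicting distinctness of the $\Omega_j$'s. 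The pointwise part of your argument is fine as far as it goes: at $\x_0\in\partial U\cap\Omega$ one has $\langle\w_j,\x_0\rangle\le a$ for all $j$ with equality for at least one index, and your perturbation-plus-continuity step correctly shows that each active index forces a second active hyperplane through $\x_0$. The gap is the closing step. The claim that ``on at least one such cone the point is interior to $U$, contradicting $\x_0\in\partial U$'' is a non sequitur: every boundary point of the open set $U$ has points of $U$ arbitrarily close by definition, so exhibiting a cone of nearby points inside $U$ contradicts nothing. What your analysis actually establishes is that, locally near $\x_0$, $U$ is a union of open half-spaces through $\x_0$ on which the doubling property holds; but that is perfectly consistent with $\x_0$ remaining a boundary point, because such a union can cover a full neighborhood of $\x_0$ minus a low-dimensional cone.

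This is not a repairable presentation issue: openness of $\Omega\setminus U$ simply does not follow from the hypotheses. In $\mathbb{R}^3$ take the six half-spaces $\Omega_j=\{\x:\langle\w_j,\x\rangle>0\}$ whose normals are the six unit vectors spaced $60^\circ$ apart in the $(x_2,x_3)$-plane. Here $a=0$ for all $j$, the $\Omega_j$ are pairwise distinct, and every $\x$ with $(x_2,x_3)\neq(0,0)$ satisfies at least two of the six inequalities strictly (the six inner products form three pairs $\pm c_r$, and at most one $c_r$ can vanish), so the hypothesis of the lemma holds; yet $U=\Omega\setminus\{\x:x_2=x_3=0\}$ is nonempty, proper, and its complement in $\Omega$ (two segments of the $x_1$-axis) is not open. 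So your dichotomy cannot be completed as stated; this configuration shows the everywhere form of the conclusion only survives in a measure-theoretic reading ($\partial U\cap\Omega$ is $\mathcal{H}^{l-1}$-null, i.e.\ $U$ has zero or full measure in $\Omega$), which is the form the paper's Hausdorff-measure argument actually delivers and all that its use in Proposition \ref{globalmin} requires, namely $\mathbb{P}\left[\Omega_{\W}\right]=0$. A related red flag: your proof never invokes $a\geq0$, which is essential---dropping it, already in the plane one can place six such half-planes through a common point $\x_0$ of the annulus and obtain $U=\Omega\setminus\{\x_0\}$---so any complete argument must exploit, as the paper does, that the complement is a convex set containing the origin.
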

\begin{proof}[Proof of Lemma \ref{top}]
Define $\tilde{\Omega}=\bigcup_{j=1}^k\Omega_j$, by De Morgan's law, we have
$$\tilde{\Omega}^c=\left(\mathop{\cup}_{j=1}^k\Omega_j\right)^c=\mathop{\cap}_{j=1}^k\Omega_j^c.$$
Note that $k$ is finite and $\bm0\in\Omega_j^c$ for all $j\in[k]$, we know $\tilde{\Omega}^c$ is a generalized polyherdon and hence either
$$\left(\partial\tilde{\Omega}\right)\cap\Omega=\emptyset$$
or
$$\mathcal{H}^{l-1}\left(\left(\partial\tilde{\Omega}\right)\cap\Omega\right)>0.$$
The first case is trivial. We show that the second case contradicts our assumption. Note that
$$\partial\tilde{\Omega}=\partial\left(\mathop{\cup}_{j=1}^k\Omega_j\right)\subseteq\mathop{\cup}_{j=1}^k\partial\Omega_j,$$
we know there exists some $j^\star\in[k]$ such that
$\mathcal{H}^{l-1}\left(\partial\Omega_{j^\star}\cap\Omega\right)>0.$
It follows from our assumption that 
$\tilde{\Omega}=\mathop{\cup}_{j=1}^k\Omega_j=\mathop{\cup}_{j\not=j^\star}\Omega_j$,
and hence
$$\mathcal{H}^{l-1}\left(\partial\Omega_{j^\star}\cap\partial\Omega_j\right)>0.$$
Note that $\partial\Omega_j$'s are hyperplanes. Therefore, 
$\Omega_j=\Omega_{j^\star}$, contradicting with our assumption that all $\Omega_j$'s are distinct. 
\end{proof}
The following result shows that the coarse gradient vanishes only at a global minimizer with zero loss, except for some degenerate cases. 
\begin{prop}\label{globalmin}
Under Assumption \ref{v}, if $\tilde{\nabla}_{\bm w_j} l_i(\W)=\bm 0$ for all $\tilde{v}_{i,j}>0$ and $\tilde{\bm w}_{j,i}$'s are distinct, then $l_i(\W)=0$. 
\end{prop}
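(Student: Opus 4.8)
The plan is to turn the vanishing of the coarse gradient into a measure statement, use it to identify the loss-positive region $\Omega_{\W}$ exactly, and then rule out that region by a geometric dichotomy furnished by Lemma \ref{top}. Fix the class $i$ and put $J_i=\{j:v_{i,j}>0\}$; under Assumption \ref{v} these are exactly the indices with $\tilde v_{i,j}>0$. For each such $j$, Lemma \ref{helper2} gives $\langle\tilde{\w}_{j,i},-\tilde\nabla_{\w_j}l_i(\W)\rangle\geq\frac{\tilde v_{i,j}\delta}{2C_p}\mathbb{P}_{\mathcal{D}_i}(\Omega_{\W}^j)^2$, and since the left-hand side vanishes while $\tilde v_{i,j}\delta>0$ and $C_p<\infty$, I conclude $\mathbb{P}_{\mathcal{D}_i}(\Omega_{\W}^j)=0$ for every $j\in J_i$. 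Recalling $\Omega_{\W}^j=\Omega_{\W}\cap\Omega_{\w_j}^0$, this says $\Omega_{\W}$ is, up to a $\mathcal{D}_i$-null set, disjoint from each open half-space $\Omega_{\w_j}^0$, $j\in J_i$.

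Next I would pin down $\Omega_{\W}$. On the polyhedral cone $P:=\bigcap_{j\in J_i}\{\x:\langle\w_j,\x\rangle\leq0\}$ every relevant pre-activation is nonpositive, so $\sigma(h_j)=0$ for all $j\in J_i$ and hence $o_i=\sum_{j\in J_i}v_{i,j}\sigma(h_j)=0$; because $o_r\geq0$ for $r\neq i$, the margin satisfies $f_i=o_i-\max_{r\neq i}o_r\leq0<1$, so $P\subseteq\Omega_{\W}$. Combined with Step 1 (which forces $\Omega_{\W}\subseteq P$ a.s.), I get $\Omega_{\W}=P$ up to a $\mathcal{D}_i$-null set. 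Since $l_i(\W)=0$ is equivalent to $\mathbb{P}_{\mathcal{D}_i}(\Omega_{\W})=0$ and $p_i>0$ on $\{m<|\x|<M\}$, the whole problem reduces to the geometric claim that the half-spaces $\{\Omega_{\w_j}^0\}_{j\in J_i}$ cover the support annulus up to Lebesgue-null, i.e. that $P$ has zero Lebesgue measure in $\mathcal{V}_i$.

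To establish this I would apply Lemma \ref{top} with $l=d_i$, $\Omega=\{\x\in\mathcal{V}_i:m<|\x|<M\}$, and the half-spaces $\Omega_{\w_j}^0$ (whose normals are the $\w_{j,i}$); the distinctness of the unit vectors $\tilde{\w}_{j,i}$ guarantees these half-spaces are pairwise distinct, and the alternative $(\bigcup_{j\in J_i}\Omega_{\w_j}^0)\cap\Omega=\emptyset$ is excluded once at least one $\w_{j,i}\neq0$. The remaining ingredient, the covering hypothesis of Lemma \ref{top}, is verified by contradiction: were $P$ of positive measure it would be full-dimensional and possess a facet lying in some hyperplane $\{\langle\w_{j_0},\x\rangle=0\}$, whose relative interior (using distinctness, so no two faces coincide) satisfies $\langle\w_j,\x\rangle<0$ for all $j\in J_i\setminus\{j_0\}$; a thin exterior slab $\{0<\langle\w_{j_0},\x\rangle<1,\ \langle\w_j,\x\rangle<0\ (j\neq j_0)\}$ then has $\sigma(h_{j_0})=1$ and all other $\sigma(h_j)=0$, so $o_i=v_{i,j_0}<1$ by Assumption \ref{v}(3), whence $f_i<1$ and the slab lies in $\Omega_{\W}\cap\Omega_{\w_{j_0}}^0$, a set of positive measure, contradicting Step 1.

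\textbf{The main obstacle is exactly this last verification.} Two points require care. First, for genuine multi-bit activation a point deep inside $\Omega_{\w_{j_0}}^0$ (where $\langle\w_{j_0},\x\rangle$ is large, so $\sigma(h_{j_0})\geq2$) may satisfy $o_i\geq1$ on its own and thus need \emph{not} lie in $\Omega_{\W}$; this is why the contradiction must be extracted from a \emph{thin} slab adjacent to a facet of $P$, where $\sigma(h_{j_0})=1$, rather than from the whole half-space. Second, the strict inequality $v_{i,j}<1$ in Assumption \ref{v}(3) together with the distinctness of the $\tilde{\w}_{j,i}$ is essential: distinctness ensures every facet of $P$ is crossed by switching on a \emph{single} neuron, so that the output jump across it equals $v_{i,j_0}<1$; if two directions coincided the jump could reach or exceed $1$ and no contradiction would follow. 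Making the facet/slab argument rigorous, and checking it dovetails with the literal hypothesis of Lemma \ref{top}, is where the real work of the proof lies.
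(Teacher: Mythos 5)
Your proposal is correct in substance, but it reaches the conclusion by a genuinely different route than the paper at the key landscape step. Both proofs start identically: vanishing coarse gradient plus Lemma \ref{helper2} (with $g'\geq\delta>0$ and $C_p<\infty$) forces $\mathbb{P}_{\mathcal{D}_i}(\Omega_{\W}^j)=0$ for every $j$ with $\tilde v_{i,j}>0$, and both ultimately lean on Assumption \ref{v}(3) ($v_{i,j}<1$) and the distinctness of the $\tilde{\w}_{j,i}$. Where you diverge is in how the multi-bit structure is neutralized. The paper's device is the level decomposition $\sigma(x)=\sum_{a=0}^{q_b-1}\mathds{1}_{\{x>a\}}(x)$: it applies Lemma \ref{top} to the \emph{enlarged} family of half-spaces $\{\Omega_{\w_j}^a\}$ over all quantization levels $a$, so that the covering hypothesis holds verbatim pointwise --- a point deep inside $\Omega_{\w_{j_0}}^0$ with $\sigma(h_{j_0})\geq 2$ is automatically covered by the distinct set $\Omega_{\w_{j_0}}^1$ of the same family, while at points with $\sigma(h_{j_0})=1$ the margin condition $o_i-o_\xi\geq 1$ together with $v_{i,j}-v_{\xi,j}<1$ forces a second active neuron $j'$. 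The dichotomy of Lemma \ref{top} then gives that the union of half-spaces is all of the annulus, hence $\Omega_\W$ is null. You instead keep only the $a=0$ half-spaces, correctly diagnose that the covering hypothesis can genuinely \emph{fail} for that smaller family (your obstacle about points with $\sigma(h_{j_0})\geq2$ and $o_i\geq1$ is exactly right, and is precisely what the paper's level decomposition is engineered to absorb), and substitute a convex-polyhedral argument: identify $\Omega_\W$ with the cone $P=\bigcap_{j\in J_i}\{\langle\w_j,\x\rangle\leq0\}$ up to null sets, and if $P$ had positive measure, extract a facet and a thin exterior slab where $\sigma(h_{j_0})=1$ and $o_i=v_{i,j_0}<1$, contradicting $\mathbb{P}(\Omega_\W^{j_0})=0$. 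This slab argument is sound and can be made rigorous (the cone structure lets you place the slab inside the annulus where $p_i>0$; antipodal normals, which distinctness does not exclude, collapse $P$ to a hyperplane and so cannot occur in the positive-measure branch), but note that once you have it, Lemma \ref{top} is logically superfluous --- your framing of the slab as ``verifying the covering hypothesis'' is inaccurate, since the hypothesis may be false for your family; the slab directly yields the contradiction. Comparatively, the paper's enlargement trick is shorter and avoids polyhedral facet regularity, while your version is more elementary and geometrically explicit; both share the same implicit caveat in the degenerate case where all $\w_{j,i}=\bm0$ for $j\in J_i$ (the ``empty union'' branch of Lemma \ref{top}, silently dismissed in the paper and handled separately in the proof of Theorem \ref{main1}), which you at least flag explicitly.
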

\begin{proof}[Proof of Proposition \ref{globalmin}]
For quantized ReLU function, let $q_b :=\max\limits_{x\in\mathbb{R}}\sigma(x)$ be the maximum quantization level, so that $$\sigma(x)=\sum_{a=0}^{q_b-1}\mathds{1}_{\{x>a\}}(x).$$ 

Note that 
\begin{align*}
f_i\left(\W;\x\right)=o_i-o_\xi=\sum_{j=1}^k\left(v_{i,j}-v_{\xi,j}\right)\sigma(h_j)
=\sum_{j=1}^k\left(v_{i,j}- v_{\xi,j}\right)\sum_{a=0}^{q_b}\mathds{1}_{\Omega_{\bm w_j}^a}(\x).
\end{align*}

By assumption, $\tilde{\nabla}_{\bm w_j} l_i(\W)=\bm 0$ for all $\tilde{v}_{i,j}>0$ which implies $\mathds{1}_{\Omega_{\W}}(\x)\mathds{1}_{\Omega_{\bm w_j}^a}(\x)=0$ for all $\tilde{v}_{i,j}>0$ and $a\in[n]$ almost surely. Now, for any $\x\in\Omega_{\bm w_j}^a$ we have $\x\not\in\Omega_{\W}$. Note that $\x\in\Omega_{\W}$ if and only if $o_i-o_\xi\geq1$, then for any $\bm x\in\Omega_{\bm w_j}^a$, since $v_{i,j}-v_{\xi,j}<1$, there exist $j'\not=j$ and $a'\in[n]$ such that $v_{i,j'}>0$ and $\x\in\Omega_{\bm w_{j'}}^{a'}$. By Lemma \ref{top}, $\mathop{\mathbb{P}}_{\{\x,y\}\sim\mathcal{D}_i}\left[\Omega_{\W}\right]=0$ is empty, and thus $l_i(\W)=0$.
\end{proof}
The following lemma shows that the expected coarse gradient is continuous except at $\bm w_{j,i}=\bm0$ for some $j\in[k]$.

\begin{lemma}\label{contgrad}
Consider the network in (\ref{net1}). $\tilde{\nabla}_{\bm w_j}l_i(\W)$ is continuous on $$\left\{\W\in \mathbb{R}^{k\times d}:|\bm w_{j,i}|>0\text{ for all }j\in[k],i\in[n]\right\}.$$
\end{lemma}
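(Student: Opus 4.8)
The plan is to establish continuity at a fixed admissible point by the Dominated Convergence Theorem. Since $\mathcal{D}_i$ is supported on $\mathcal{V}_i$ and $\langle\bm w_j,\x\rangle=\langle\bm w_{j,i},\x\rangle$ for $\x\in\mathcal{V}_i$ (as in Lemma \ref{remainder}), I would first write the coarse gradient as an integral against the bounded density $p_i$,
$$\tilde{\nabla}_{\bm w_j}l_i(\W)=-\int_{\mathcal{V}_i}\bigl(v_{i,j}-v_{\xi,j}\bigr)\,\mathds{1}_{\Omega_{\W}}(\x)\,g'\bigl(\langle\bm w_{j,i},\x\rangle\bigr)\,\x\;p_i(\x)\,d\x.$$
Fix $\W^0$ with $|\bm w_{j,i}^0|>0$ for all $j\in[k]$, $i\in[n]$, and let $\W^{(m)}\to\W^0$ be arbitrary. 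The integrand is dominated in norm by the constant $\hat{v}_j\tilde{\delta}M$, using $|v_{i,j}-v_{\xi,j}|\leq\hat{v}_j$, $0\leq g'\leq\tilde{\delta}$, and $|\x|\leq M$; this constant is integrable against the probability measure $p_i\,d\x$. It therefore suffices to show that the integrand converges for $p_i$-almost every $\x$.

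Next I would isolate the exceptional set of $\x$ on which the integrand can be discontinuous in $\W$. Writing $q_b=\max_x\sigma(x)$, the only discontinuities come from (i) the jumps of the staircase $\sigma$ appearing in each output $o_r$, which occur when some pre-activation $\langle\bm w_{j',i},\x\rangle$ meets an integer level in $\{0,1,\dots,q_b-1\}$, and (ii) the jump of $g'$ at the origin, i.e. $\langle\bm w_{j,i},\x\rangle=0$. I would collect these into
$$B=\bigcup_{j'=1}^{k}\bigcup_{a=0}^{q_b-1}\bigl\{\x\in\mathcal{V}_i:\langle\bm w_{j',i}^0,\x\rangle=a\bigr\}.$$
Because $\bm w_{j',i}^0\neq\bm0$ for every $j'$, each set in this finite union is a genuine affine hyperplane of $\mathcal{V}_i$ and hence has zero Lebesgue measure; since $p_i$ is bounded (data assumption (3)), it follows that $\mathop{\mathbb{P}}_{\{\x,y\}\sim\mathcal{D}_i}(B)=0$. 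This is exactly the step that uses the hypothesis $|\bm w_{j,i}|>0$.

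It remains to verify continuity of the integrand at $\W^0$ for each $\x\notin B$. For such an $\x$, every pre-activation $\langle\bm w_{j',i}^0,\x\rangle$ lies strictly between consecutive jump points of $\sigma$, so each $\sigma(\langle\bm w_{j',i},\x\rangle)$ is constant for $\W$ in a neighborhood of $\W^0$; intersecting the finitely many neighborhoods shows that all outputs $o_r(\W;\x)$, $r\in[n]$, are simultaneously locally constant in $\W$. Consequently the index $\xi=\mathop{\text{argmax}}_{r\neq i}o_r$ (under any fixed tie-breaking rule), the margin $o_i-\max_{r\neq i}o_r$, and therefore the indicator $\mathds{1}_{\Omega_{\W}}(\x)$ are all locally constant, hence continuous at $\W^0$. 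Finally $\langle\bm w_{j,i}^0,\x\rangle\neq0$, and $g'$ is continuous on $(-\infty,0)$ and on $(0,\infty)$ (where it equals $0$ and lies in $[\delta,\tilde{\delta}]$, respectively, as for all examples in Figure \ref{ste}), so $g'(\langle\bm w_{j,i},\x\rangle)$ converges as well. Thus the whole integrand converges to its value at $\W^0$ for every $\x\notin B$, and the Dominated Convergence Theorem gives $\tilde{\nabla}_{\bm w_j}l_i(\W^{(m)})\to\tilde{\nabla}_{\bm w_j}l_i(\W^0)$.

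The routine part is the domination bound; the main obstacle is the almost-everywhere pointwise continuity, because both $\mathds{1}_{\Omega_{\W}}$ and $v_{\xi,j}$ depend on $\W$ through the piecewise-constant, discontinuous map $\W\mapsto o_r(\W;\x)$. The decisive observation that dissolves this difficulty is that, off the finitely many hyperplanes collected in $B$, no pre-activation can cross a jump of $\sigma$ under small perturbations of $\W$, so all $o_r$ — and with them the argmax, the margin, and the indicator — are locally constant; this reduces the only surviving continuity question to that of $g'$ at a nonzero argument. I would also note that $B$ depends on $\W^0$, so the argument proves continuity separately at each admissible $\W^0$, which is precisely what the statement requires.
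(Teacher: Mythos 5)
Your proposal is correct and follows essentially the same route as the paper's own proof: write $\tilde{\nabla}_{\bm w_j}l_i$ as an expectation, note that the integrand converges $p_i$-almost everywhere as $\W\to\W^0$, and conclude by the Dominated Convergence Theorem with the constant bound $\hat{v}_j\tilde{\delta}M$. The only difference is that the paper merely asserts the a.e. convergence, whereas you make it explicit by exhibiting the null set $B$ of jump hyperplanes (which is where the hypothesis $|\bm w_{j,i}|>0$ and the boundedness of $p_i$ enter) and verifying local constancy of the outputs, the argmax $\xi$, and the indicator $\mathds{1}_{\Omega_{\W}}$ off $B$ --- a worthwhile elaboration of the step the paper leaves implicit.
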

\begin{proof}[Proof of Lemma \ref{contgrad}]
It suffices to prove the result for $j\in[k]$. Note that
$$
\tilde{\nabla}_{\bm w_j}l_i(\W)
=\mathop{\mathbb{E}}_{\{\bm x,y\}\sim\mathcal{D}_i}\left[-\left(v_{y,j}-v_{\xi,j}\right)\,\mathds{1}_{\Omega_{\W}}(\x)\,g'(h_j)\x\right]
$$
For any $\W^0$ satisfying our assumption, we know
$$\lim_{\W\rightarrow\W^0}\mathds{1}_{\Omega_\W}(\x)g'(h_j)=\mathds{1}_{\Omega_{\W^0}}(\x)g'(h_j^0), \mbox{ a.e.}$$ The desired result follows from the Dominant Convergence Theorem. 
\end{proof}

\section{Proof of Main Results}
Equipped with the technical lemmas, we present:  
\begin{proof}[Proof of Theorem \ref{main1}]
It is easily noticed from Assumption \ref{v} that $v_{i,j}>0$ if and only if $\tilde{v}_{i,j}>0$. 
By Lemma \ref{descent}, if $v_{i,j}>0$ and $|\bm w_{j,i}^0|>0$, then $|\bm w_{j,i}^t|>0$ for all $t$. Since $\W$ is randomly initialized, we can ignore the possibility that $\bm w_{j,i}^0=\bm0$ for some $j\in[k]$ and $i\in[n]$.
Moreover,
Proposition \ref{conv1} and Equation (\ref{cgd}) imply for all $v_{i,j}>0$ $$\lim_{t\rightarrow\infty}\left|\tilde{\nabla}_{\bm w_j}l_i(\W^t)\right|=0.$$

Suppose $\W^\infty$ is an accumulation point and $\bm w_{j,r}^{\infty}\not=\bm0$ for all $j\in[k]$ and $r\in[n]$, we know for all $v_{i,j}>0$
$$\tilde{\nabla}_{\bm w_j}l_i\left(\W^\infty\right)=\bm0.$$

Next, we consider the case when $\bm w_{j,r}=\bm0$ for some $j\in[k]$ and $r\in[n]$.
Lemma \ref{helper2} implies $v_{r,j}=0$.
We construct a new sequence $$\hat{\bm w}_{j,r}^t=\left\{
\begin{aligned}
\bm w_{j,r}^t& \;\; \text{ if }\bm w_{j,r}^\infty\not=0\\
\bm0&\;\; \text{ if }\bm w_{j,r}^\infty=0\\
\end{aligned}
\right.$$
and $$\hat{\bm W}_r^t=\left[\hat{\bm w}_{1,r}^t,\cdots,\hat{\bm w}_{k,r}^t\right].$$
With
$$\hat{o}_r=\sum_{j=1}^kv_{r,j}\sigma(\hat{h}_j)=\sum_{j=1}^kv_{r,j}\sigma\left(\left\langle\hat{\bm w}_{j,r},\x\right\rangle\right),$$
we know
$\hat{o}_r=o_r$ for all $r\in[n]$.
Hence, we have
$$l\left(\hat{\bm W}^t,\{\x,i\}\right)=\text{ReLU}\left(1-\hat{o}_i+\hat{o}_\xi\right)= l\left(\bm W^t,\{\x,i\}\right).$$
This implies that  $\Omega_{\hat{\W}^t}=\Omega_{\W^t}$,
so we have for all $j\in[k]$,
$$
\left|\left\langle\tilde{\nabla}_{\bm w_j}l_i(\hat{\bm W}_1^t),\tilde{\w}_{j,i}^t\right\rangle\right|\leq\left|\left\langle\tilde{\nabla}_{\bm w_j}l_i(\bm W_i^t),\tilde{\w}_{j,i}^t\right\rangle\right|\leq\left|\tilde{\nabla}_{\bm w_j}l_i(\bm W_i^t)\right|.$$
Letting $t$ go to infinity on both side, we get
$$\left|\left\langle\tilde{\nabla}_{\bm w_j}l_i(\hat{\bm W}^\infty),\tilde{\w}_{j,i}^\infty\right\rangle\right|=0.$$
By Lemma \ref{helper1} and Lemma \ref{helper2}, we know
$$\tilde{\nabla}_{\bm w_j}l_i(\W^\infty)=\tilde{\nabla}_{\bm w_j}l_i(\bm W_i^\infty)=0,$$
so $\tilde{\nabla}_{\W}l_i(\W^\infty)=0.$ By Proposition \ref{globalmin}, $l_i(\W^t)=0$, which completes the proof.
\end{proof}

\section{Experiments}
In this section, we conduct experiments on both synthetic and MNIST data to verify and complement our theoretical findings. Experiments on larger networks and data sets will left for a future work.
\subsection{Synthetic Data}\label{experiments}
Let $\left\{\bm e_1,\bm e_2,\bm e_3,\bm e_4\right\}$ be orthonormal basis of $\mathbb{R}^4$, $\theta$ be an acute angle and $\bm v_1=\bm e_1$, $\bm v_2=\sin\theta\,\bm e_2+\cos\theta\,\bm e_3$, $\bm v_3=\bm e_3$, $\bm v_4=\bm e_4$. Now, we have two linearly independent subspaces of $\mathbb{R}^4$ namely $\mathcal{V}_1=\text{Span}\left(\left\{\bm v_1,\bm v_2\right\}\right)$ and $\mathcal{V}_2=\text{Span}\left(\left\{\bm v_3,\bm v_4\right\}\right)$. We can easily calculate that the angle between $\mathcal{V}_1$ and $\mathcal{V}_2$ is $\theta$.
Next, with
$$S_r=\left\{\frac{j}{10}:j\in[20]-[9]\right\}, \;
S_\varphi=\left\{\frac{j\pi}{40}:j\in[80]\right\},$$
we define
$$\hat{\mathcal{X}}_1=\left\{r\left(\cos\varphi\,\bm v_1+\sin\varphi\,\bm v_2\right):r\in S_r,\varphi\in S_\varphi\right\}$$
and
$$\hat{\mathcal{X}}_2=\left\{r\left(\cos\varphi\,\bm v_3+\sin\varphi\,\bm v_4\right):r\in S_r,\varphi\in S_\varphi\right\}.$$
Let $\hat{\mathcal{D}}_i$ be uniform distributed on $\hat{\mathcal{X}}_i\times\{i\}$ and $\hat{\mathcal{D}}$ be a mixture of $\hat{\mathcal{D}}_1$ and $\hat{\mathcal{D}}_2$. Let $\hat{\mathcal{X}}=\hat{\mathcal{X}}_1\cup\hat{\mathcal{X}}_2$. 
The activation function $\sigma$ is 4-bit quantized ReLU:
$$\sigma(x)=\left\{\begin{array}{ccc}
     0& \text{if} &x<0,\\
     \text{ceil}(x)& \text{if} &0\leq x<15,\\
     15& \text{if} &x\geq15.\\
\end{array}\right.$$
For simplicity, we take $k=24$ and $v_{i,j}=\frac{1}{2}$ if $j-12(i-1)\in[12]$ for $i\in[2]$ and $j\in[24]$ and $0$ otherwise. Now, our neural network becomes
$$
f_i=\frac{(-1)^{i-1}}{2}\left[\sum_{j=1}^{12}\sigma(h_j)-\sum_{j=1}^{12}\sigma(h_{j+12})\right]
$$
where $h_j=\langle\bm w_j,\x\rangle$ and $\x\in\mathbb{R}^4$. 
The population loss is given by
$$
l(\W)=\mathop{\mathbb{E}}_{\{\x,y\}\sim\hat{\mathcal{D}}}\left[l(\W;\{\x,y\})\right]
=\mathop{\mathbb{E}}_{\{\x,y\}\sim\hat{\mathcal{D}}}\left[\max\left\{1-f_i\right\}\right].
$$
We choose the ReLU STE (i.e., $g(x) = \max\{0,x\}$) and use the coarse gradient
$$
\begin{aligned}
&\tilde{\nabla}_{\W}l(\W)=\mathop{\mathbb{E}}_{\{\x,y\}\sim\hat{\mathcal{D}}}\left[\tilde{\nabla}_{\W}l\left(\W,\{\x,y\}\right)\right]\\
=&\frac{1}{|\hat{\mathcal{X}}|}\left[\sum_{\x\in\hat{\mathcal{X}}_1}\tilde{\nabla}_{\W}l\left(\W;\{\x,1\}\right)+\sum_{\x\in\hat{\mathcal{X}}_2}\tilde{\nabla}_{\W}l\left(\W;\{\x,2\}\right)\right].
\end{aligned}
$$
Taking learning rate $\eta=1$, we have equation \ref{cgd} becomes
$$\W^{t+1}=\W^t-\tilde{\nabla}_{\W}l\left(\W^t\right).$$

We find that the coarse gradient method converges to a global minimum with zero loss. 
As shown in box plots of Fig. \ref{angle_iters__angle_norms}, the {\it convergence still holds when the sub-spaces $\mathcal{V}_1$ and $\mathcal{V}_2$ form an acute angle}, and even when the data come from two levels of Gaussian noise perturbations of $\mathcal{V}_1$ and $\mathcal{V}_2$. The {\it convergence is faster} and with a smaller weight norm {\it when $\theta$ increases towards  $\frac{\pi}{2}$ or $\mathcal{V}_2$ are orthogonal to each other}. This observation clearly
supports the robustness of Theorem 1 
beyond the regime of orthogonal classes.

\begin{figure}[t]
    \centering
    \begin{tabular}{cc}
    \includegraphics[width=0.48\linewidth]{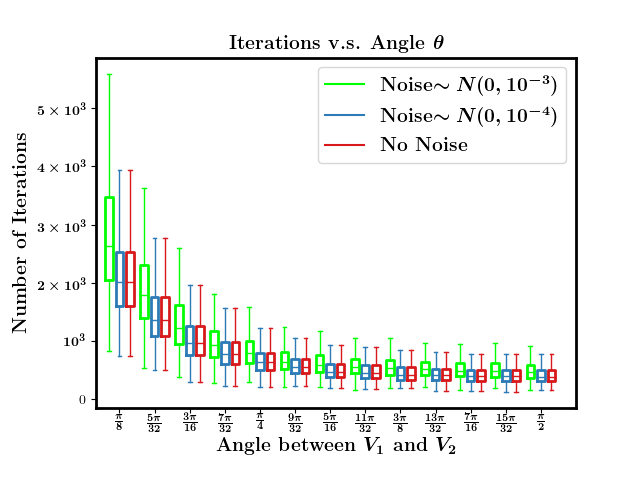}
    &
    \includegraphics[width=0.48\linewidth]{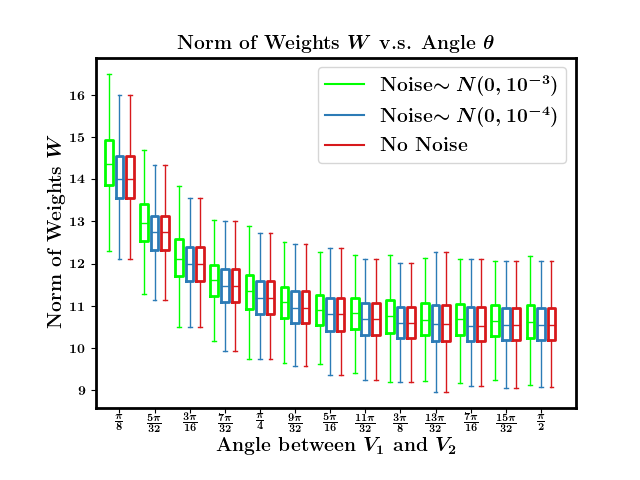}
    \end{tabular}
    \caption{\textbf{Left}: Iterations to convergence v.s. $\theta$, \textbf{Right}: Norm of weights v.s. $\theta$.}
    \label{angle_iters__angle_norms}
\end{figure}

\subsection{MNIST Experiments}

\begin{figure}[t]
    \centering
    \begin{tabular}{cc}
    \includegraphics[width=0.48\linewidth]{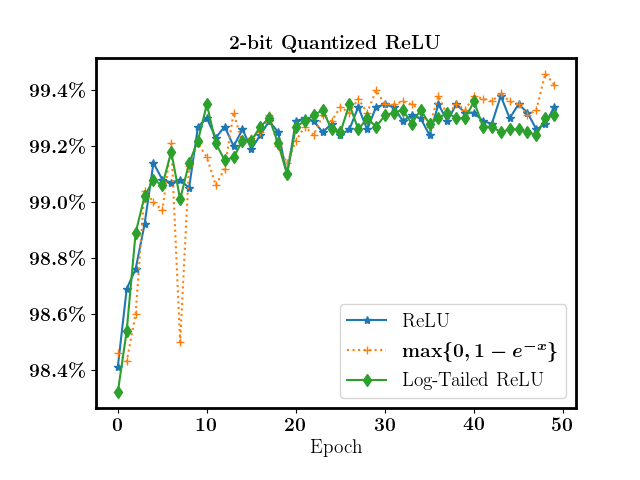} &
    \includegraphics[width=0.48\linewidth]{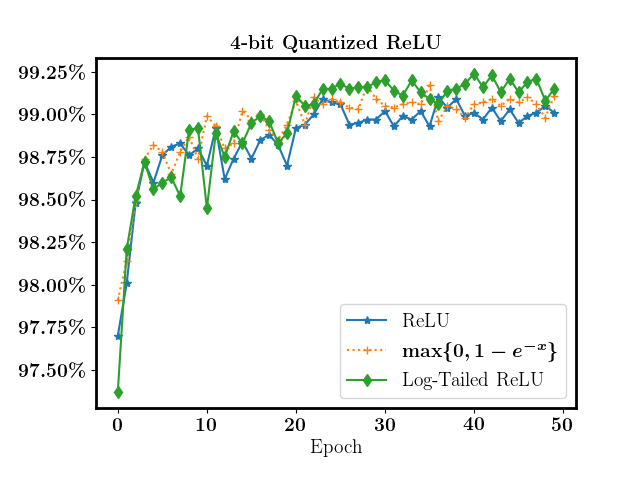}
    \end{tabular}
    \caption{Validation accuracies in training LeNet-5 with quantized (2-bit and 4-bit) ReLU activation.}
    \label{mnist_acc}
\end{figure}

\begin{figure}
    \centering
    \includegraphics[width=0.8\linewidth]{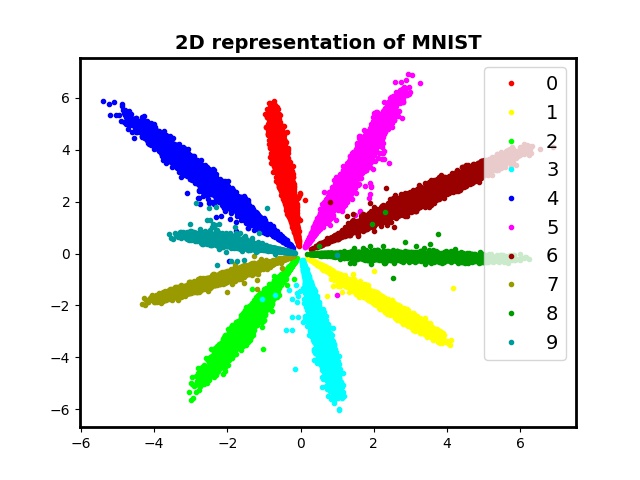}
    \caption{2D projections of MNIST features from a trained convolutional neural network \cite{cosface} with quantized activation function. The 10 classes are color coded, the feature points cluster near linearly independent subspaces.}
    \label{subspace}
\end{figure}

Our theory works for a board range of STEs, while their empirical performances on deeper networks may differ. In this subsection, we compare the performances of the three type of STEs in Fig. 2.

As in \cite{halfwave_17}, we resort to a modified batch normalization layer \cite{bnorm_15} and add it before each activation layer. As such, the inputs to quantized activation layers always follow unit Gaussian distribution. Then
the scaling factor $\tau$ applied to the output of quantized activation layers can be pre-computed via $k$-means approach  
and get fixed during the whole training process.  The optimizer we use to train quantized LeNet-5 is the (stochastic) coarse gradient method with momentum = 0.9. The batch size is $64$, and learning rate is initialized to be $0.1$ and then decays by a factor of 10 after every $20$ epochs. The three backward pass substitutions $g$ for the straight through estimator are (1) ReLU $g(x) = \max\{x,0\}$, (2) reverse exponential $g(x)=\max\{0,q_b(1- e^{-x/q_b})\}$ (3) log-tailed ReLU. The validation accuracy for each epoch is shown in Fig. \ref{mnist_acc}. The validation accuracies at bit-widths 2 and 4 are listed in Table. \ref{mnist}. Our results show that these STEs all perform very well and give satisfactory accuracy. Specifically, reverse exponental and log-tailed STEs are comparable, both of which are slightly better than ReLU STE.
In Fig. \ref{subspace}, we show 2D projections of MNIST features at the end of 100 epoch training of a 7 layer convolutional neural network \cite{cosface} with quantized activation.  The features are extracted from input to the last fully connected layer. The data points cluster 
near linearly independent subspaces. Together with subsection 8.1, we have numerical evidence that the linearly independent subspace data structure (working as an extension of subspace orthogonality) occurs for high level features in a deep network for a nearly perfect  classification, rendering support to the realism of our theoretical study. Enlarging angles between linear subspaces can improve classification accuracy, see \cite{LFT_2018} for such an effort on MNIST and CIFAR-10 data sets via linear feature transform.

\begin{table}[ht]
\caption{Validation Accuracy (\%) on MNIST with LeNet5.}
\label{mnist}
\centering
\begin{tabular}{ccc}
\toprule
$g(x)$ & bit-width ($b$) & valid. accuracy\\
\midrule
  &32& 99.45\\
  \midrule
\multirow{2}{*}{ReLU} & 2& 99.10\\
 & 4& 99.38\\
 \midrule
\multirow{2}{*}{reverse exp.} & 2 & 99.17\\
 & 4 & 99.46\\
 \midrule
\multirow{2}{*}{log-tailed ReLU} & 2 &99.24\\
 & 4 &99.36\\
\bottomrule
\end{tabular}
\end{table}
\subsection{CIFAR-10 Experiments}
In this experiment, we train VGG-11/ResNet-20 with 4-bit activation function on CIFAR-10 data set to numerically validate the boundedness assumption upon the $\ell_2$-norm of weight. The optimizer is momentum SGD with no weight decay.  We used initial learning rate $=0.1$, with a decay factor of $0.1$ at the $80$-th and $140$-th epoch.

we see from Fig. \ref{fig:cifar-norm} that the $\ell_2$ norm of weights is bounded during the training process. This figure also shows that the norm of weights is generally increasing in epochs which coincides with our theoretical finding shown in Lemma \ref{descent}. 
\begin{figure}
    \centering
    \includegraphics[width=0.5\linewidth]{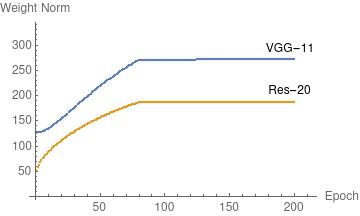}
    \caption{CIFAR-10 experiments for VGG-11 and ResNet-20: weight $\ell_2$-norm vs epoch.}
    \label{fig:cifar-norm}
\end{figure}
\section{Summary}
We studied a novel and important biased first-order oracle, called coarse gradient, 
in training quantized neural networks. The effectiveness of coarse gradient relies on the choice of STE used in backward pass only. We proved the convergence of coarse gradient methods for a class of STEs bearing certain monotonicity in non-linear classification using one-hidden-layer networks. In experiments on LeNet and MNIST data set, we considered three different proxy functions satisfying the monotonicity condition for backward pass: ReLU, reverse exponential function and log-tailed ReLU for training LeNet-5 with quantized activations. All of them exhibited good performance which verified our theoretical findings. 
In future work, we plan to expand theoretical understanding of coarse gradient descent for deep activation quantized networks.

\section{Acknowledgement}
This work was partially supported by NSF grants IIS-1632935, DMS-1854434, DMS-1924548, and DMS-1924935.
On behalf of all authors, the corresponding author states that there is no conflict of interest.

\bibliographystyle{siamplain}
\bibliography{reference}
\end{document}